\documentclass{article}
\usepackage{arxiv}
\usepackage{natbib}
\usepackage{enumitem}
\usepackage[utf8]{inputenc} 
\usepackage[T1]{fontenc}    
\usepackage{url}            
\usepackage{wrapfig}
\usepackage{needspace}
\usepackage{booktabs}       
\usepackage{amsfonts}       
\usepackage{amsmath}
\usepackage{algorithm}    
\usepackage{algpseudocode}
\usepackage{graphicx} 
\usepackage{multirow}
\usepackage{nicefrac}       
\usepackage{microtype}      
\usepackage{xcolor}         
\usepackage{amsthm}
\usepackage{bbm}
\providecommand{\ExpTableFont}{\footnotesize}
\usepackage[colorlinks=true, linkcolor=blue, citecolor=blue, urlcolor=blue]{hyperref}
\newtheorem{theorem}{Theorem}
\newtheorem{assumption}{Assumption}
\newcommand{\answerTODO}[1][]{\textcolor{red}{\bf [TODO]}}
\newcommand{\justificationTODO}[1][]{\textcolor{red}{\bf [TODO]}}
\title{FAST-Brain: A Flow-Aligned Spatio-Temporal\\Surrogate Brain Model}
\author{%
\makebox[0.40\textwidth]{Shucheng Liu}\\%
University of North Carolina\\%
at Chapel Hill\\%
\texttt{shucheng@unc.edu}\\%
\And%
\makebox[0.40\textwidth]{Chengchun Shi}\\%
London School of Economics\\%
and Political Science\\%
\texttt{c.shi7@lse.ac.uk}\\%
\AND%
\makebox[0.40\textwidth]{Kai Zhang}\\%
University of North Carolina\\%
at Chapel Hill\\%
\texttt{zhangk@email.unc.edu}\\%
\And%
\makebox[0.40\textwidth]{Hongtu Zhu}\\%
University of North Carolina\\%
at Chapel Hill\\%
\texttt{htzhu@email.unc.edu}\\%
}
\hypersetup{
  pdftitle={FAST-Brain: A Flow-Aligned Spatio-Temporal Surrogate Brain Model},
  pdfauthor={Shucheng Liu, Chengchun Shi, Kai Zhang, Hongtu Zhu}
}

\begin{document}

\maketitle

\begin{abstract}
Modeling resting-state functional magnetic resonance imaging (rs-fMRI) data is crucial for understanding brain-wide neural activity. However, traditional methods struggle to capture complex temporal dynamics over long horizons, to account for the brain's anatomical spatial structure, and to model high-dimensional ambient signals that lie on a low-dimensional intrinsic subspace. We propose FAST-Brain, a unified flow-aligned spatio-temporal surrogate brain model that addresses all three challenges. At its core is a flow-aligned generative framework that directly predicts the clean blood-oxygen-level-dependent (BOLD) signal, paired with a graph convolutional network that captures spatial structural constraints and a Transformer that models long-range temporal dependencies. Theoretically, we show that under a low-dimensional subspace assumption,  the approximation error of our model scales with the intrinsic dimension rather than the ambient dimension, which justifies our direct modeling of the BOLD signal. Extensive experiments on synthetic and Human Connectome Project datasets demonstrate that FAST-Brain achieves state-of-the-art performance in recovering functional connectivity, effective connectivity, and the implicit low-dimensional signal subspace.
\end{abstract}

\section{Introduction}\label{sec:intro}

The brain is a complex network of interconnected regions of interest (ROIs) that jointly support
cognition and behavior \citep{park2013structural, deco2021revisiting, zhu2023statistical}.
Resting-state functional magnetic resonance imaging (rs-fMRI) provides a non-invasive window
into brain-wide blood-oxygen-level-dependent (BOLD) dynamics and inter-regional dependencies
\citep{Biswal1995, van2010exploring, calhoun2014chronnectome,lindquist2025statistics,kang2026statistical}. Recent data-driven methods aim
to learn the generative process of BOLD signals to construct high-fidelity surrogate brain
models, or digital twins \citep{Thomas2022, Lu2023, Dong2024, Luo2025}. Such models can
reproduce realistic neural dynamics and functional dependencies, and infer effective interactions
among brain regions through non-invasive computation \citep{Friston2011, Luo2025}.

However, existing models suffer from three limitations. First, many models struggle to capture
\textbf{long-range temporal dependencies}. Existing surrogate models rely on recurrent neural
networks (RNNs) or multilayer perceptrons \citep{tu2019state, perich2020inferring, Luo2025},
which fail to model the high-dimensional, long-horizon temporal structure of BOLD signals.

Second, existing models insufficiently account for the \textbf{spatial structure} of
brain dynamics. Cross-regional interactions are shaped by anatomical topology, including
white-matter structural connectivity, fiber lengths, and macroscopic functional network
organization \citep{hagmann2008mapping, honey2009predicting}. While several studies
incorporate structural connectivity or white-matter information into fMRI analysis
\citep{zhu2014fusing, wein2022forecasting, han2024brainode}, these
approaches rely on a single spatial constraint. Few existing models jointly
integrate multiple complementary spatial priors like structural connectivity, fiber lengths,
and functional network modules into the signal generation process. Moreover,
brain dynamics are not simple deterministic transitions but complex stochastic systems
driven by latent cognitive states and intrinsic noise \citep{deco2013resting,
breakspear2017dynamic}, causing standard temporal models to smooth out variability and
accumulate errors over long horizons. 

Third, existing diffusion- and flow-based fMRI generators do not fully exploit the \textbf{intrinsic
low-dimensional structure} of brain signals. These models parameterize high-dimensional denoising
residuals or velocity fields \citep{tew2025t2idiff, tew2026functional}, yet rs-fMRI signals---though
observed in a high-dimensional ROI-time space---are believed to concentrate near a low-dimensional
subspace shaped by anatomical and dynamical constraints \citep{vincent2010stacked,lindquist2025statistics, gallego2017neural,
pezon2024linking}. Learning targets in the ambient space can obscure this structure and reduce
sample efficiency.

\begin{figure}[t]
    \centering
    \includegraphics[width=\textwidth]{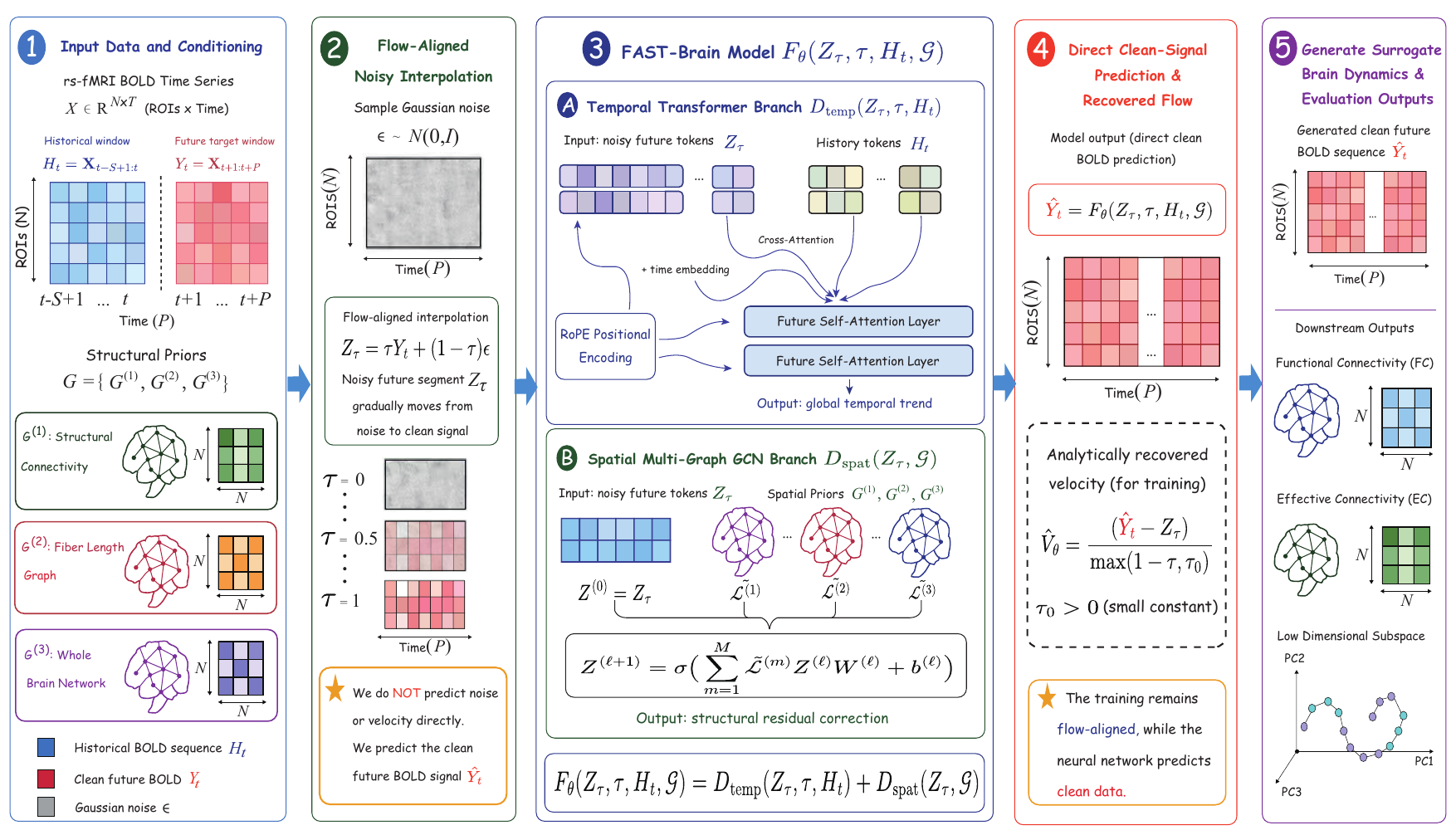}
    \vspace{-19pt}
    \caption{Model architecture of FAST-Brain. The framework integrates flow-aligned clean-signal prediction with a temporal Transformer and a spatial multi-graph GCN to generate surrogate BOLD dynamics for downstream connectivity and subspace analysis.}
    \vspace{-4pt}
    \label{fig:fast_brain_overview}
\end{figure}

To address all three limitations, we propose \textbf{FAST-Brain}, a \textbf{F}low-\textbf{A}ligned
\textbf{S}patio-\textbf{T}emporal surrogate \textbf{Brain} model. FAST-Brain integrates three
components (Figure~\ref{fig:fast_brain_overview}): (i) a \textbf{flow-aligned generative framework}
that directly predicts clean BOLD signals rather than noise or velocity; (ii) a \textbf{Transformer}
\citep{vaswani2017attention} that captures long-range temporal dependencies via history-conditioned
cross-attention; and (iii) a \textbf{graph convolutional network} \citep[GCN;][]{kipf2016semi,
geng2019spatiotemporal} that embeds structural connectivity, fiber lengths, and functional network
membership as spatial priors. Directly predicting the clean signal---rather than a noise or velocity
residual---enables the model to exploit the low-dimensional geometry of BOLD signals, improving
both learning efficiency and fidelity \citep{li2025back}.

We provide theoretical justification for our design. Under a low-dimensional subspace assumption
on fMRI signals, we prove that the Bayes-optimal denoiser depends on the noisy observation only
through its low-dimensional projection, and that the approximation error of our model scales with
the intrinsic dimension $d$ rather than the ambient ROI dimension $N$. In contrast, noise or
velocity-based prediction targets do not admit the same factorization, retaining dependence on
the full ambient space. 
Empirically, extensive experiments on synthetic datasets (RNN and SDDEs) and the real Human
Connectome Project (HCP) dataset demonstrate that FAST-Brain achieves state-of-the-art performance
in recovering functional connectivity, effective connectivity, and the intrinsic low-dimensional
signal geometry. On the HCP data, FAST-Brain raises functional connectivity correlation
from $0.726$ to $\mathbf{0.938}$ and reduces MAE from $0.199$ to $\mathbf{0.073}$ relative
to the strongest baseline.

\section{Related Work}

Our work intersects three lines of research: generative modeling for time series, direct
data prediction for image generation, and spatio-temporal modeling of brain dynamics. We review each in turn
and clarify how FAST-Brain advances beyond the current frontier.

\textbf{Generative Models for Time Series.}
Early conditional generative models for time-series relied on variational autoencoders
(VAEs) \citep{desai2021timevae, orlichenko2024demographic} and generative adversarial
networks (GANs) \citep{xu2020cot, tan2024brainfc}, which struggle with training stability
and mode diversity respectively. Diffusion-based models subsequently emerged as a more
principled alternative, achieving stable generation through iterative denoising
\citep{ho2020denoising, kong2020diffwave, tashiro2021csdi}; representative models include
SSSD \citep{alcaraz2022diffusion}, TimeDiff \citep{shen2023non}, and Diffusion-TS
\citep{yuan2024diffusion}. More recently, flow matching has reformulated generation as
learning a continuous velocity field that transports a simple prior to the data distribution
\citep{lipman2022flow, tamir2024conditional}, enabling faster and more direct deterministic
sampling than stochastic differential equation (SDE)-based diffusion. FlowTS extends this
paradigm to conditional time-series modeling \citep{hu2024flowts}, and recent work has
applied flow matching to fMRI generation \citep{tew2026functional}. Despite this progress,
all of these models---whether diffusion- or flow-based---parameterize noise or velocity
fields in the full ambient space, which, as we argue below, is ill-suited to signals
concentrated on a low-dimensional subspace.

 \textbf{Direct Data Prediction for Image Generation.}
A growing body of work in image-domain generative modeling suggests that directly
predicting the clean data target, rather than noise or velocity, is more effective when
the signal lies near a low-dimensional manifold. Methods such as EDM
\citep{karras2022elucidating}, SiD2 \citep{hoogeboom2025simpler}, and JiT
\citep{li2025back} demonstrate consistent gains from this reparameterization in
high-dimensional image generation. The intuition is straightforward: the clean signal
inherits the low-dimensional structure of the data distribution, whereas noise and
velocity targets do not, making them harder to learn in high-dimensional ambient spaces.
Despite these advances in imaging, direct data prediction has received little attention
in structured time-series generation, and has not been explored for fMRI signals whose
dynamics are constrained by anatomical organization. FAST-Brain fills this gap.

\textbf{Spatio-temporal Modeling of Brain Dynamics.}
Classical whole-brain models include mechanistic simulators such as The Virtual Brain
\citep{sanz2013virtual} and dynamical systems frameworks \citep{breakspear2017dynamic},
as well as connectivity inference methods such as dynamic causal modeling (DCM)
\citep{friston2014dcm} and Granger causality (GC) \citep{granger1969investigating}.
While these approaches offer interpretability, they rely on predefined generative
assumptions or linear dependencies, limiting their ability to capture nonlinear
spatio-temporal dynamics at scale \citep{PhysRevE.97.052216}. Data-driven surrogate
brain models and digital twins have emerged as a more flexible alternative, learning
brain dynamics directly from observed fMRI data \citep{tu2019state, Lu2023, Luo2025}.
However, existing data-driven models either neglect anatomical spatial priors or fail
to exploit the intrinsic low-dimensional structure of fMRI signals---the two gaps that
FAST-Brain directly addresses through its graph-guided spatial decoder and flow-aligned
clean-signal prediction framework.

\section{Preliminaries}

We introduce the problem setup for rs-fMRI modeling and review flow matching,
the generative framework underlying FAST-Brain.

\textbf{Problem Setup.} We consider the problem of modeling rs-fMRI dynamics. For a given subject, the
observed data consist of BOLD time series recorded across $N$ pre-defined regions
of interest (ROIs) over $T$ acquisition time points, organized into a matrix
$X \in \mathbb{R}^{N \times T}$, where each row corresponds to one ROI and each
column to one time point.

Beyond the BOLD observations, we have access to three subject-level
anatomical information that summarize the brain's structural organization.
First, a structural connectivity matrix $SC \in \mathbb{R}^{N \times N}$, where
$SC_{ij}$ encodes the strength of the white-matter connection between ROIs $i$
and $j$. Second, a fiber length matrix $L \in \mathbb{R}^{N \times N}$, where
$L_{ij}$ is the average fiber length between the same pair. Third, a coarse
functional partition $c_i \in \{1, \ldots, C\}$ that assigns each ROI to one
of $C$ macroscopic functional networks. Both $SC$ and $L$ are normalized to
$[0,1]$. We treat $SC$, $L$, and $\{c_i\}$ collectively as time-invariant
\emph{spatial priors} that encode biophysically grounded constraints on
inter-regional interactions, and denote them by $\mathcal{G}$. How these
priors enter the model is described in Section~\ref{subsec:spatiotemporal}.

Our goal is to learn how BOLD activity evolves over time, conditional on its
recent history and these spatial priors. We adopt a sliding-window formulation:
at each time index $t$, the history window $H_t = X_{t-S+1:t} \in
\mathbb{R}^{N \times S}$ collects the past $S$ time points, and the future
target window $Y_t = X_{t+1:t+P} \in \mathbb{R}^{N \times P}$ contains the
next $P$ time points to be generated. Training the model reduces to learning
the conditional distribution $p(Y_t \mid H_t, \mathcal{G})$ over future BOLD
trajectories. Once trained, the model is rolled out autoregressively to produce
surrogate BOLD sequences of arbitrary length, from which downstream quantities
such as functional connectivity (FC) and effective connectivity (EC) can be
derived.

\textbf{Flow Matching}
Flow matching \citep{lipman2022flow} learns a continuous-time vector field that
transports a simple prior distribution toward the target data distribution. We
briefly review the framework as it applies to our setting.

Let $Y_t \in \mathbb{R}^{N \times P}$ denote the future BOLD segment to be
generated, and let $\epsilon \sim \mathcal{N}(0, I)$ be a noise sample of the
same dimension drawn from a standard Gaussian prior. Flow matching defines a
transport process indexed by $\tau \in [0,1]$, connecting noise to data via a
linear interpolation
$
    Z_\tau = \tau Y_t + (1-\tau)\epsilon,
    \label{eq:interpolation}
$
so that $Z_\tau \sim p_{\mathrm{noise}}$ at $\tau = 0$ and $Z_\tau \sim
p_{\mathrm{data}}$ at $\tau = 1$ \citep{tu2019state, lipman2022flow,
hu2024flowts}. The corresponding flow velocity is
 $
    V = {dZ_\tau}/{d\tau} = Y_t - \epsilon.
$
Standard flow matching learns a parametric velocity field $V_\theta$ by
minimizing the mean squared error against this target velocity,
$
    \mathcal{L}_{\mathrm{FM}} =
    \mathbb{E}_{\tau,\,Y_t,\,\epsilon}
    \bigl[\|V_\theta(Z_\tau,\tau) - V\|^2\bigr].
    \label{eq:fm_loss}
$
At inference, a BOLD sequence is generated by integrating the learned ODE
$
    {dZ_\tau}/{d\tau} = V_\theta(Z_\tau,\tau),
     \ Z_0 \sim p_{\mathrm{noise}},
$ 
from $\tau=0$ to $1$ using a standard numerical solver, yielding a clean
sample $Z_1 \approx Y_t$.

In FAST-Brain, we build on this framework but replace velocity prediction with
direct clean-signal prediction, a design choice we motivate theoretically and
empirically in Section~\ref{sec:method}.

\section{Method}\label{sec:method}

We propose FAST-Brain, a flow-aligned spatio-temporal surrogate brain model for
rs-fMRI generation. The model has two core components: (i) a \emph{flow-aligned
direct data prediction} framework that predicts clean BOLD signals rather than
noise or velocity, and (ii) a spatio-temporal network architecture that combines
a history-conditioned Transformer for long-range temporal dependencies with a
GCN that injects anatomical spatial priors $\mathcal{G}$ as a residual
correction. Together, these components address the three limitations identified
in Section~\ref{sec:intro}. The complete pipeline is visualized in
Figure~\ref{fig:fast_brain_overview}.

Compared with step-by-step autoregressive models, our multi-step generation
design reduces exposure bias, in which small prediction errors compound over
the rollout horizon \citep{bengio2015scheduled}.

\subsection{Flow-Aligned Direct Data Prediction}
\label{subsec:prediction}

\paragraph{Motivation.}
Standard flow matching parameterizes a velocity field $V_\theta(Z_\tau, \tau)$
that lives in the full ambient $N \times P$ space. For rs-fMRI signals, which
are believed to concentrate near a low-dimensional subspace
\citep{gallego2017neural, chen2023score, pezon2024linking}, this is
inefficient: the velocity target contains components off the signal subspace
that carry no information about brain dynamics, making the network harder to
learn in high-dimensional ROI regimes. We therefore replace velocity prediction
with direct clean-signal prediction, building on analogous findings in
image-domain generative modeling \citep{karras2022elucidating, li2025back,
hoogeboom2025simpler}. We show in Section~\ref{subsec:theory} that this choice
is not merely heuristic: the Bayes-optimal denoiser inherits the same
low-dimensional structure as the signal itself.

\paragraph{Framework.}
Given the historical context $H_t$ and spatial priors $\mathcal{G}$, let
$Y_t \in \mathbb{R}^{N \times P}$ denote the clean future BOLD target. We
retain the flow matching interpolation path
$ 
    Z_\tau = \tau Y_t + (1-\tau)\epsilon, \quad \epsilon \sim \mathcal{N}(0,I),
$ 
with target velocity $V = Y_t - \epsilon$. Rather than predicting $V$ (or
$\epsilon$) directly, FAST-Brain parameterizes the clean signal,
$   \widehat{Y}_t = F_\theta(Z_\tau,\, \tau,\, H_t,\, \mathcal{G}),
$  
where the architecture of $F_\theta$ is detailed in
Section~\ref{subsec:spatiotemporal}. The velocity field is then recovered
analytically as
$ 
    \widehat{V}_\theta
    =  (\widehat{Y}_t - Z_\tau)/{\max(1-\tau,\,\tau_0)},
    \qquad \tau_0 = 0.05, 
$ 
where $\tau_0$ is a small clipping constant that prevents division by zero near
$\tau = 1$. Substituting into the flow matching objective yields the
FAST-Brain training loss,
\begin{equation}
    \mathcal{L}_{\mathrm{FAST}}
    = \mathbb{E}\!\left[
    \left\|
    \frac{F_\theta(Z_\tau,\tau,H_t,\mathcal{G}) - Z_\tau}{\max(1-\tau,\tau_0)}
    - (Y_t - \epsilon)
    \right\|_2^2
    \right].
    \label{eq:fast_loss}
\end{equation}
This formulation preserves the flow-aligned training objective while anchoring
all predictions in the data space. The training time $\tau$ follows a
logit-normal schedule, consistent with standard practice \citep{li2025back}.
Algorithms 1 and 2 summarize the full
training and sampling procedures.

\subsection{Spatio-Temporal Network Architecture}
\label{subsec:spatiotemporal}

We implement $F_\theta$ as the sum of two complementary decoders,
\begin{equation}
    F_\theta(Z_\tau,\tau,H_t,\mathcal{G})
    = D_{\mathrm{temp}}(Z_\tau,\tau,H_t)
    + D_{\mathrm{spat}}(Z_\tau,\mathcal{G}),
    \label{eq:decomposition}
\end{equation}
where $D_{\mathrm{temp}}$ captures history-dependent temporal evolution and
$D_{\mathrm{spat}}$ injects time-invariant anatomical structure as a residual
correction. The additive decomposition is intentional: it allows the two
branches to be trained in a curriculum fashion, as described below.

\paragraph{Temporal Decoder.}
$D_{\mathrm{temp}}(Z_\tau,\tau,H_t)$ models the large-scale temporal trend of
the future signal conditioned on the historical window, implemented as a
history-conditioned Transformer \citep{vaswani2017attention}. 
The future noisy signal $Z_\tau$, augmented with a time embedding of $\tau$,
serves as queries, while the historical window $H_t$ serves as keys and values
in a multi-head cross-attention layer. Rotary position embeddings
\citep{su2024roformer} are applied to the query and key representations before
attention is computed, encoding relative temporal position without requiring
absolute positional tokens. The cross-attention output passes through residual
connections, layer normalization, and two self-attention layers, allowing the
$P$ future time points to interact with one another across the forecast horizon.

\paragraph{Spatial Decoder.}
$D_{\mathrm{spat}}(Z_\tau,\mathcal{G})$ injects anatomical constraints as a
residual correction on top of the temporal trend. We encode the three spatial
priors as graph adjacency matrices,
\begin{equation}
    G^{(1)}_{ij} = SC_{ij}, \qquad
    G^{(2)}_{ij} = 1 - L_{ij}, \qquad
    G^{(3)}_{ij} = \mathbf{1}(c_i = c_j),
    \label{eq:graph_matrices}
\end{equation}
where $G^{(1)}$ captures direct anatomical coupling strength, $G^{(2)}$ assigns
larger weights to shorter fiber pathways (since $L$ is normalized to $[0,1]$),
and $G^{(3)}$ encodes shared functional-network membership. For each
$G^{(m)}$, we form the symmetrically normalized Laplacian
\begin{equation}
    \mathcal{L}^{(m)}
    = I - (D^{(m)})^{-1/2}\, G^{(m)}\, (D^{(m)})^{-1/2},
    \label{eq:laplacian}
\end{equation}
where $D^{(m)}$ is the corresponding degree matrix. To capture higher-order
structural interactions, we fuse the three Laplacians via a learnable
polynomial,
\begin{equation}
    \tilde{\mathcal{L}}^{(m)}
    = \sum_{k=0}^{K-1} \alpha_{m,k}\,(\mathcal{L}^{(m)})^k,
    \label{eq:poly_fusion}
\end{equation}
with learnable coefficients $\alpha_{m,k}$. Node features are then updated
through $L_g$ GCN layers,
\begin{equation}
    Z^{(\ell+1)}
    = \sigma\!\left(
    \sum_{m=1}^{M} \tilde{\mathcal{L}}^{(m)} Z^{(\ell)} W^{(\ell)} + b^{(\ell)}
    \right), \quad \ell = 0,\ldots,L_g-1,
    \label{eq:gcn_update}
\end{equation}
where $Z^{(0)} = Z_\tau \in \mathbb{R}^{N \times P}$ is the initialization,
and $W^{(\ell)} \in \mathbb{R}^{P_\ell \times P_{\ell+1}}$ is a feature
transformation shared across all nodes. We use $L_g = 2$ layers in all
experiments.

\paragraph{Curriculum initialization.}
The final projection layer of $D_{\mathrm{spat}}$ is zero-initialized, so the
spatial branch contributes nothing at the start of training. This implements a
lightweight curriculum: the model first learns the global temporal trend through
$D_{\mathrm{temp}}$, then progressively incorporates structural corrections from
$D_{\mathrm{spat}}$ as training proceeds. The additive structure in
Eq.~\eqref{eq:decomposition} ensures that $D_{\mathrm{temp}}$ is never
destabilized by the spatial branch during early optimization.

\begin{figure}[t]
\centering
\begin{minipage}[t]{0.48\linewidth}
\small
\hrule height 0.8pt
\vspace{4pt}
\noindent\textbf{Algorithm 1} Training   
\vspace{4pt}
\hrule height 0.4pt
\vspace{4pt}
\begin{algorithmic}[1]
\Require Dataset $\mathcal D$, priors $\mathcal G$, clip $\tau_0$, parameters $\theta$
\For{each mini-batch $(H_t,Y_t)\sim\mathcal D$}
\State Sample $\epsilon\sim\mathcal N(0,I)$ and $s$ from normal distribution; set $\tau=\sigma(s)$
\State $Z_\tau \gets \tau Y_t + (1-\tau)\epsilon$
\State $\widehat Y_t \gets F_\theta(Z_\tau,\tau,H_t,\mathcal G)$
\State $\widehat V_\theta \gets (\widehat Y_t-Z_\tau)/\max(1-\tau,\tau_0)$
\State $\mathcal L_{\mathrm{FAST}}\gets \|\widehat V_\theta-(Y_t-\epsilon)\|_2^2$
\EndFor
\State \Return trained $\theta^\star$
\end{algorithmic}
\vspace{4pt}
\hrule height 0.8pt
\end{minipage}\hfill
\begin{minipage}[t]{0.48\linewidth}
\small
\hrule height 0.8pt
\vspace{4pt}
\noindent\textbf{Algorithm 2} Sampling  
\vspace{4pt}
\hrule height 0.4pt
\vspace{4pt}
\begin{algorithmic}[1]
\Require History $H_t$, priors $\mathcal G$, steps $K=20$, clip $\tau_0$
\State Sample $Z_0\sim\mathcal N(0,I)$ in $\mathbb{R}^{N\times P}$,set $\Delta\tau\gets 1/K$
\For{$i=0,\ldots,K-1$}
\State $\tau_i\gets i/K$
\State $\widehat Y_i\gets F_\theta(Z_i,\tau_i,H_t,\mathcal G)$
\State $\widehat V_i\gets (\widehat Y_i-Z_i)/\max(1-\tau_i,\tau_0)$
\State $Z_{i+1}\gets Z_i+\Delta\tau\,\widehat V_i$
\EndFor
\State \Return $\widehat Y_t\gets Z_K$
\end{algorithmic}
\vspace{4pt}
\hrule height 0.8pt
\end{minipage}
\vspace{2mm}
\label{fig:fast_algorithms}
\end{figure}

\subsection{Theoretical Guarantees on Low-Dimensional Subspaces}
\label{subsec:theory}

\paragraph{Setup and motivation.}
Resting-state fMRI signals are believed to concentrate near a low-dimensional
subspace shaped by brain anatomy \citep{gallego2017neural, chen2023score,
pezon2024linking}. A natural question is whether our direct clean-signal
prediction strategy can exploit this structure more efficiently than
noise- or velocity-based parameterizations. We answer this affirmatively:
under a low-dimensional subspace assumption, the Bayes-optimal denoiser
depends on the noisy observation \emph{only} through its intrinsic projection,
and the approximation error of our model scales with the intrinsic dimension
$d$ rather than the ambient ROI dimension $N$. Noise- and velocity-based
targets, by contrast, do not admit the same factorization.

\paragraph{Reduction to a weighted prediction loss.}
We first observe that, over $\tau \in [\tau_0, 1-\tau_0]$, the FAST-Brain
loss~\eqref{eq:fast_loss} is equivalent to a weighted squared loss for
predicting $Y_t$ directly. Substituting $Z_\tau = \tau Y_t + (1-\tau)\epsilon$
and rearranging,
$     \mathcal{L}_{\mathrm{FAST}}
    = \mathbb{E}\!\left[
     {(1-\tau)^{-2}}
    \bigl\|F_\theta(Z_\tau,\tau,H_t,\mathcal{G}) - Y_t\bigr\|_2^2
    \right].
$
Therefore, for each fixed $(Z_\tau{=}Z,\,\tau,\,H_t{=}h,\,\mathcal{G}{=}g)$,
the Bayes-optimal predictor under this objective is the conditional mean
\begin{equation}
    F^*(Z,\tau,h,g)
    := \mathbb{E}\bigl[Y_t \mid Z_\tau=Z,\, H_t=h,\, \mathcal{G}=g\bigr].
    \label{eq:bayes_opt}
\end{equation}
The main result below characterizes the structure of $F^*$ and shows that
a neural network approximating it incurs an error governed by $d$, not $N$.

We impose three regularity conditions.
\begin{assumption}[Low-dimensional signal subspace]
\label{ass:subspace}
The future signal $Y_t$ lies almost surely in a $d$-dimensional subspace,
with $d \ll N$. Specifically, $Y_t = AS_t$ a.s., where $S_t \in
\mathbb{R}^{d \times P}$ is a latent coefficient matrix and $A \in
\mathbb{R}^{N \times d}$ is an orthonormal basis satisfying $A^\top A = I_d$.
We write $Z_\tau' = A^\top Z_\tau \in \mathbb{R}^{d \times P}$ for the
projected noisy state, with conditional density $p_{\tau|h,g}'$.
\end{assumption}

\begin{assumption}[Regularity of the latent denoiser]
\label{ass:regularity}
The conditioning spaces $\mathcal{H} \subset \mathbb{R}^{m_h}$ and
$\mathcal{G} \subset \mathbb{R}^{m_g}$ are compact. For every $(\tau,h,g)
\in [\tau_0,1-\tau_0] \times \mathcal{H} \times \mathcal{G}$, the conditional
density $p_{\tau|h,g}'$ is positive and continuously differentiable in $Z'
\in \mathbb{R}^{d \times P}$. The latent denoiser
$
    f(Z',\tau,h,g)
    := \frac{1}{\tau}\bigl(Z' + (1-\tau)^2 \nabla\log p_{\tau|h,g}'(Z')\bigr)
$   
is jointly continuous on $[-R,R]^{d\times P} \times [\tau_0,1-\tau_0]
\times \mathcal{H} \times \mathcal{G}$ for every $R > 0$.
\end{assumption}

\begin{assumption}[Linear growth and uniform integrability]
\label{ass:growth}
There exist constants $C_0, C_1 > 0$ such that $\|f(Z',\tau,h,g)\|_F \le
C_0 + C_1\|Z'\|_F$ for all $(Z',\tau,h,g)$. Moreover, $Z_\tau'$ is uniformly
square-integrable conditional on $(H_t{=}h, \mathcal{G}{=}g)$:
$   \lim_{R\to\infty}
    \sup_{\tau,h,g}
    \mathbb{E}\!\left[
    \|Z_\tau'\|_F^2\,
    \mathbf{1}\{\|Z_\tau'\|_F > R\}
    \mid H_t=h,\,\mathcal{G}=g
    \right] = 0.
$ 
\end{assumption}

Assumption~\ref{ass:subspace} formalizes the widely held belief that rs-fMRI
signals have intrinsic low-dimensional structure
\citep{gallego2017neural,pezon2024linking}. Assumption~\ref{ass:regularity}
requires the latent conditional density to be smooth, which is standard for
score-based analyses \citep{chen2023score}. Assumption~\ref{ass:growth}
imposes mild linear growth and tail conditions on the latent denoiser.

\begin{theorem}[Low-dimensional approximation of the Bayes-optimal denoiser]
\label{thm:lowddenoiser}
Fix $\tau_0 \in (0,1/2)$. Under Assumptions~\ref{ass:subspace}--\ref{ass:growth},
the Bayes-optimal denoiser $F^*$ defined in \eqref{eq:bayes_opt} admits the
low-dimensional factorization
\begin{equation}
    F^*(Z,\tau,h,g) = A\, f(A^\top Z,\tau,h,g),
    \label{eq:factorization}
\end{equation}
where $f$ is the latent denoiser defined in Assumption \ref{ass:regularity}.
Moreover, for every $\varepsilon > 0$, there exists a ReLU feedforward neural
network $f_\theta : \mathbb{R}^{d\times P} \times [\tau_0,1-\tau_0] \times
\mathcal{H} \times \mathcal{G} \to \mathbb{R}^{d\times P}$ such that the
induced predictor $F_{A,\theta}(Z,\tau,h,g) := A f_\theta(A^\top Z,\tau,h,g)$
satisfies
\begin{equation}
    \sup_{\substack{\tau \in [\tau_0,1-\tau_0] \\ h \in \mathcal{H},\;
    g \in \mathcal{G}}}
    \bigl\|F_{A,\theta}(\cdot,\tau,h,g)
    - F^*(\cdot,\tau,h,g)\bigr\|_{L^2(P_{\tau|h,g})}
    \le \sqrt{dP+1}\;\varepsilon.
    \label{eq:approx_bound}
\end{equation}
\end{theorem}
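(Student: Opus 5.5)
The proof has two parts: an exact factorization of the Bayes-optimal denoiser, followed by a standard approximation argument carried out in the latent space.

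\textbf{Step 1: the factorization.} Fix $(\tau,h,g)$ and write $\sigma=1-\tau$. Complete $A$ to an orthogonal matrix $[A,A_\perp]$ and decompose
\[
Z_\tau = A(\tau S_t + \sigma A^\top\epsilon) + A_\perp(\sigma A_\perp^\top \epsilon).
\]
Since $\epsilon$ is standard Gaussian, $A^\top\epsilon$ and $A_\perp^\top\epsilon$ are independent Gaussians. They are also independent of $(S_t,H_t,\mathcal{G})$, because the noise is drawn independently of the data. Hence the orthogonal component $A_\perp^\top Z_\tau$ is independent of $(S_t, A^\top Z_\tau)$ given $(h,g)$, so
\[
\mathbb{E}[S_t\mid Z_\tau,h,g]=\mathbb{E}[S_t\mid Z'_\tau,h,g].
\]
Applying $A$ gives $F^* = A\,\mathbb{E}[S_t\mid Z'_\tau=A^\top Z,h,g]$. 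To identify this with $f$, I would use Tweedie's formula in the latent space. Since $Z'_\tau=\tau S_t+\sigma\epsilon'$ with $\epsilon'\sim\mathcal{N}(0,I_{dP})$, differentiating the convolution density under the integral gives
\[
\nabla\log p'_{\tau|h,g}(Z') = \frac{\tau\,\mathbb{E}[S_t\mid Z'] - Z'}{\sigma^2},
\]
hence $\mathbb{E}[S_t\mid Z'] = \tfrac{1}{\tau}\bigl(Z'+(1-\tau)^2\nabla\log p'\bigr)=f(Z',\tau,h,g)$. The differentiation under the integral is justified by Gaussian domination, and the $C^1$ positivity in Assumption~\ref{ass:regularity} makes the score well defined. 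This establishes \eqref{eq:factorization}.

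\textbf{Step 2: reduction to the latent space.} Because $A^\top A=I_d$, we have $\|A M\|_F=\|M\|_F$. It follows that
\[
\|F_{A,\theta}-F^*\|_{L^2(P_{\tau|h,g})}=\|f_\theta-f\|_{L^2(\text{law of }Z'_\tau)},
\]
so the ambient dimension $N$ disappears entirely. The problem is now to approximate the continuous, linearly growing map $f$ on $\mathbb{R}^{dP}\times[\tau_0,1-\tau_0]\times\mathcal{H}\times\mathcal{G}$, uniformly over the parameters, in $L^2$ of the latent law.

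\textbf{Step 3: truncation plus uniform approximation, the main obstacle.} The domain $\mathbb{R}^{dP}$ is unbounded, so I would split the error into a bounded-region part and a tail part.

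\emph{Tail.} Choose a network that grows at most linearly everywhere. For example, take the $dP$ output coordinates to be clipped versions $\mathrm{clip}_{B}(\cdot)$ of the approximants, with clipping implemented by ReLUs and $B$ sized to the bound $C_0+C_1\sqrt{dP}R$. On the tail region $\{\|Z'\|_F>R\}$, both $f$ and $f_\theta$ are then $O(1+\|Z'\|_F)$. Assumption~\ref{ass:growth} then makes the tail contribution
\[
\mathbb{E}\bigl[\|f_\theta-f\|_F^2\,\mathbf{1}\{\|Z'\|_F>R\}\bigr]\le \varepsilon^2
\]
uniformly in $(\tau,h,g)$ once $R$ is chosen large enough. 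One detail needs care: the clipping level $B$ must be fixed before the network is chosen, so that the tail constant does not depend on the network.

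\emph{Bounded region.} On the compact set $K=[-R,R]^{dP}\times[\tau_0,1-\tau_0]\times\mathcal{H}\times\mathcal{G}$, $f$ is continuous by Assumption~\ref{ass:regularity}. The universal approximation theorem for ReLU networks, applied coordinatewise to the $dP$ outputs, yields $f_\theta$ with $\sup_K|f_\theta-f|\le\varepsilon$ in every coordinate. Clipping does not increase this error, since $f$ itself lies within the clipping range on $K$. The Frobenius error on $K$ is therefore at most $\sqrt{dP}\,\varepsilon$, contributing at most $dP\,\varepsilon^2$ to the squared $L^2$ norm.

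\emph{Combining.} Adding the two parts gives squared error at most $(dP+1)\varepsilon^2$ uniformly in $(\tau,h,g)$, which is \eqref{eq:approx_bound}. The delicate step is making the tail bound uniform with a network-independent constant; the clipping device above is my first attempt at handling it.
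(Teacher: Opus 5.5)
Your proposal is correct and follows the paper's proof essentially step for step. The shared steps are the Frobenius isometry $\|AM\|_F=\|M\|_F$ to pass to the latent space, truncation at radius $R$ with coordinatewise ReLU clipping at $B=C_0+C_1\sqrt{dP}\,R$, and universal approximation on the compact cube, giving the $(dP+1)\varepsilon^2$ total. The clipping already settles your concern about the tail constant: on $\{\|Z'\|_F>R\}$ one has $\sqrt{dP}\,B\le \sqrt{dP}\,C_0+C_1\,dP\,\|Z'\|_F$, a bound independent of $R$ and of the network.

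The only difference is in Step 1. You obtain $F^*=A\,\mathbb{E}[S_t\mid Z'_\tau]$ from the conditional independence of $A_\perp^\top Z_\tau$, then apply Tweedie's formula in the $d\times P$ latent space. The paper instead factorizes the ambient density, splits the ambient score into its on-subspace and Gaussian orthogonal parts, and applies Tweedie in $\mathbb{R}^{N\times P}$. The two routes are equivalent, and yours is slightly more direct.
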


\paragraph{Interpretation.}
Equation~\eqref{eq:factorization} shows that $F^*$ depends on the noisy
observation $Z_\tau \in \mathbb{R}^{N\times P}$ \emph{only} through its
low-dimensional projection $A^\top Z_\tau \in \mathbb{R}^{d\times P}$.
The approximation bound~\eqref{eq:approx_bound} then follows from the
factorization of $F^*$ through the low-dimensional projection.
Thus, the approximation problem is governed by the intrinsic dimension $d$
rather than the ambient ROI dimension $N$.

By contrast, noise- and velocity-based prediction targets involve the full
noisy signal $Z_\tau \in \mathbb{R}^{N\times P}$ and do not admit the
factorization~\eqref{eq:factorization}. Concretely, the induced velocity
$v^*(Z_\tau,\tau,h,g) = (F^*(Z_\tau,\tau,h,g) - Z_\tau)/(1-\tau)$ contains
the term $-Z_\tau/(1-\tau)$, which lives in the full ambient space and
carries no signal about brain dynamics. Approximating such targets therefore
retains dependence on $N$, making learning harder as the number of ROIs
grows.

In summary, Theorem~\ref{thm:lowddenoiser} provides a rigorous justification
for our design choice: direct clean-signal prediction reduces the effective
dimensionality of the learning problem from the ambient ROI dimension $N$ to
the intrinsic signal dimension $d$, enabling more efficient learning
of brain dynamics.

\section{Experiments}

\textbf{Data}. We evaluate FAST-Brain on two synthetic datasets, RNN~\citep{Luo2025} and SDDEs~\citep{sanz2013virtual}, and one real rs-fMRI dataset, HCP~\citep{van2013wu}. For the RNN dataset, structural connectivity is defined by fiber length and large-scale network information. This dataset contains 20 ROIs and 8000 time points. The SDDEs dataset is generated by stochastic delay differential equations with stochastic delay effects and more complex temporal coupling, and contains 66 ROIs and 3000 time points. The HCP dataset contains 360 ROIs and 4280 time points. Details of data generation are provided in the Appendix. For all datasets, the first 80\% of the sequence samples are used for training.

\textbf{Baseline Algorithms}. We compare FAST-Brain against both fMRI-domain and general time-series prediction  algorithms. The fMRI-domain baselines include NPI \citep{Luo2025} and DSFM \citep{tew2026functional}, while the general time-series baselines include TimeDiff \citep{shen2023non}, SSSD \citep{alcaraz2022diffusion}, DiffusionTS \citep{yuan2024diffusion}, and FlowTS \citep{hu2024flowts}.

\textbf{Metrics}. Functional connectivity (FC) measures statistical dependence between ROI activities over time. We estimate FC between two regions $i$ and $j$ by
\[
\mathrm{FC}_{ij} =
\frac{\sum_{t=1}^{T} (X_{i,t} - \bar{X}_i)(X_{j,t} - \bar{X}_j)}
{\sqrt{\sum_{t=1}^{T} (X_{i,t} - \bar{X}_i)^2}\sqrt{\sum_{t=1}^{T} (X_{j,t} - \bar{X}_j)^2}},
\]
where $\bar{X}_i$ is the temporal mean of ROI $i$. For all datasets, we evaluate FC using Pearson correlation and mean absolute error (MAE) from the generated and reference FC matrices.

Effective connectivity (EC) measures how activity in one region influences the future activity of another. Following \citet{Luo2025}, we estimate EC by applying a perturbation to ROI $i$ and measuring the delayed response at ROI $j$:
$
\mathrm{EC}_{i\to j}
=
\mathbb{E}_{t}\!\left(
{\widehat{Y}^{(i,+\delta)}_{t+T,j} - \widehat{Y}_{t+T,j}}
\right)/{\delta},
$
yielding an EC matrix in $\mathbb{R}^{N\times N}$. 
For EC evaluation on synthetic data with known ground-truth directed interactions, we report correlation, AUROC \citep{fawcett2006introduction}, and AUPRC \citep{saito2015precision}. Correlation measures the agreement between estimated and ground-truth EC strengths, AUROC evaluates how well the estimated EC scores distinguish true directed links from non-links, and AUPRC summarizes precision-recall performance, which is particularly informative for sparse connectivity graphs. 

Beyond FC and EC, we evaluate the conditional generation quality of each algorithm using CRPS~\citep{matheson1976scoring}, QICE~\citep{han2022card}, ProbCorr~\citep{ni2021sig}, and Conditional FID~\citep{yue2022ts2vec}. Here, CRPS assesses the discrepancy between the predicted conditional distribution and the observed future signal. QICE evaluates calibration by measuring whether the true observations fall into generated quantile intervals at the expected frequencies. ProbCorr measures how well generated samples preserve the correlation structure of real signals. Conditional FID measures distributional similarity in a learned time-series representation space by extracting features from real and generated sequences using a pretrained TS2Vec encoder and comparing their conditional feature distributions. Their detailed definitions are given in the Appendix. We also report metrics that measure the unconditional generation quality, including Discriminative Score, Predictive Score, Context-FID, and Correlational Score \citep{yoon2019time, yuan2024diffusion}. 

For all the aforementioned evaluations, each trained generator is conditioned on an initial historical window and used to predict the next 24 time steps (i.e., $P=24$). The generated sequence is then fed back autoregressively until the generated sequence reaches the same length as the test sequence. All reported metrics are computed from these generated sequences. Each experiment is repeated 10 times, and we report the mean and standard error of each metric.

\begin{table}[H]
\centering
\scriptsize

\caption{Functional connectivity recovery across three datasets.}
\vspace{-7pt} 
\label{tab:fc_main}
\ExpTableFont

\setlength{\tabcolsep}{4pt}
\renewcommand{\arraystretch}{1.2}
\begin{tabular}{llccccccc}
\toprule
Dataset & Metric & NPI & TimeDiff & SSSD & DiffusionTS & FlowTS & DSFM & FAST-Brain \\
\midrule
\multirow{2}{*}{RNN}
& Correlation$\uparrow$ & 0.547$_{\scriptstyle \pm 0.073}$ & 0.431$_{\scriptstyle \pm 0.003}$ & 0.822$_{\scriptstyle \pm 0.010}$ & 0.824$_{\scriptstyle \pm 0.006}$ & 0.846$_{\scriptstyle \pm 0.005}$ & 0.695$_{\scriptstyle \pm 0.008}$ & 0.938$_{\scriptstyle \pm 0.006}$ \\
& MAE$\downarrow$       & 0.322$_{\scriptstyle \pm 0.075}$ & 0.364$_{\scriptstyle \pm 0.003}$ & 0.050$_{\scriptstyle \pm 0.002}$ & 0.062$_{\scriptstyle \pm 0.001}$ & 0.059$_{\scriptstyle \pm 0.001}$ & 0.087$_{\scriptstyle \pm 0.004}$ & 0.028$_{\scriptstyle \pm 0.002}$ \\
\midrule
\multirow{2}{*}{SDDEs}
& Correlation$\uparrow$ & 0.789$_{\scriptstyle \pm 0.009}$ & 0.821$_{\scriptstyle \pm 0.000}$ & 0.890$_{\scriptstyle \pm 0.001}$ & 0.886$_{\scriptstyle \pm 0.018}$ & 0.830$_{\scriptstyle \pm 0.008}$ & 0.642$_{\scriptstyle \pm 0.008}$ & 0.975$_{\scriptstyle \pm 0.009}$ \\
& MAE$\downarrow$       & 0.332$_{\scriptstyle \pm 0.006}$ & 0.311$_{\scriptstyle \pm 0.000}$ & 0.170$_{\scriptstyle \pm 0.001}$ & 0.169$_{\scriptstyle \pm 0.012}$ & 0.275$_{\scriptstyle \pm 0.003}$ & 0.304$_{\scriptstyle \pm 0.004}$ & 0.115$_{\scriptstyle \pm 0.023}$ \\
\midrule
\multirow{2}{*}{HCP}
& Correlation$\uparrow$ & 0.634$_{\scriptstyle \pm 0.074}$ & 0.344$_{\scriptstyle \pm 0.002}$ & 0.186$_{\scriptstyle \pm 0.002}$ & 0.474$_{\scriptstyle \pm 0.004}$ & 0.726$_{\scriptstyle \pm 0.008}$ & 0.692$_{\scriptstyle \pm 0.035}$ & 0.938$_{\scriptstyle \pm 0.005}$ \\
& MAE$\downarrow$       & 0.212$_{\scriptstyle \pm 0.030}$ & 0.386$_{\scriptstyle \pm 0.001}$ & 0.402$_{\scriptstyle \pm 0.000}$ & 0.201$_{\scriptstyle \pm 0.027}$ & 0.280$_{\scriptstyle \pm 0.005}$ & 0.199$_{\scriptstyle \pm 0.032}$ & 0.073$_{\scriptstyle \pm 0.008}$ \\
\bottomrule
\end{tabular}
\end{table}
\vspace{-0.3cm}
\textbf{Functional Connectivity.} Table~\ref{tab:fc_main} shows that FAST-Brain consistently achieves the highest FC correlation and lowest MAE relative to the ground-truth signals across all three datasets. This indicates that the generated signals preserve both the global connectivity topology, as measured by correlation, and the absolute FC strengths, as measured by MAE. The improvement is especially pronounced on the real HCP dataset, where FAST-Brain increases the best baseline correlation from 0.726 to 0.938 and reduces MAE from 0.199 to 0.073. These suggest that combining flow-aligned temporal generation with spatial brain constraints yields more faithful surrogate brain models.

\begin{wraptable}{r}{0.55\textwidth}
\vspace{-12pt} 
\centering
\caption{Effective connectivity analysis on RNN dataset.}
\label{tab:ec_main}
\setlength{\tabcolsep}{4.5pt}
\footnotesize
\begin{tabular}{lccc}
\toprule
Method & Correlation$\uparrow$ & AUROC$\uparrow$ & AUPRC$\uparrow$ \\
\midrule
NPI         & 0.065$_{\scriptstyle \pm 0.063}$  & 0.466$_{\scriptstyle \pm 0.061}$ & 0.093$_{\scriptstyle \pm 0.020}$ \\
TimeDiff    & 0.013$_{\scriptstyle \pm 0.046}$  & 0.487$_{\scriptstyle \pm 0.053}$ & 0.102$_{\scriptstyle \pm 0.023}$ \\
SSSD        & 0.828$_{\scriptstyle \pm 0.005}$  & 0.851$_{\scriptstyle \pm 0.005}$ & 0.736$_{\scriptstyle \pm 0.005}$ \\
DiffusionTS & 0.716$_{\scriptstyle \pm 0.021}$  & 0.848$_{\scriptstyle \pm 0.030}$ & 0.600$_{\scriptstyle \pm 0.049}$ \\
FlowTS      & 0.851$_{\scriptstyle \pm 0.004}$  & 0.916$_{\scriptstyle \pm 0.006}$ & 0.810$_{\scriptstyle \pm 0.007}$ \\
DSFM        & -0.006$_{\scriptstyle \pm 0.022}$ & 0.462$_{\scriptstyle \pm 0.028}$ & 0.093$_{\scriptstyle \pm 0.010}$ \\
FAST-Brain  & 0.956$_{\scriptstyle \pm 0.000}$  & 0.999$_{\scriptstyle \pm 0.000}$ & 0.995$_{\scriptstyle \pm 0.000}$ \\
\bottomrule
\end{tabular}
\end{wraptable}

\textbf{Effective Connectivity.} Table~\ref{tab:ec_main} shows that FAST-Brain recovers the ground-truth effective connectivity more accurately than all baselines on synthetic RNN data. It achieves the highest correlation, AUROC, and AUPRC, with considerable gains over the strongest baseline FlowTS. NPI performs less competitively in this setting, likely because its autoregressive one-step prediction approach makes it difficult to capture long-range dynamical variations accumulated over extended horizons. The near-perfect AUROC and AUPRC scores indicate that FAST-Brain can accurately identify directed interactions induced by the underlying dynamical system. Since evaluating these scores requires the ground truth, 
 our EC analysis is restricted to the synthetic data.

\textbf{Conditional Generation.}
We further compare the conditional generation quality of FAST-Brain against general time-series models on the RNN dataset. As shown in Table~\ref{tab:cond_main}, FAST-Brain achieves the best performance on all four metrics, obtaining the lowest CRPS, QICE, ProbCorr, and Conditional FID among all compared methods. These results indicate that FAST-Brain produces more accurate probabilistic forecasts, better-calibrated predictive intervals, more faithful correlation structure, and closer alignment with the real conditional distribution under the same historical context.
\begin{table}[H]
\vspace{-2pt}
\centering
\caption{Conditional generation quality on the RNN dataset.}
\vspace{-4pt}
\label{tab:cond_main}
\ExpTableFont
\renewcommand{\arraystretch}{1.2}
\footnotesize
\setlength{\tabcolsep}{2.5pt}

\begin{tabular}{lccccccc}
\toprule
Metric & NPI & TimeDiff & SSSD & DiffusionTS & FlowTS & DSFM & FAST-Brain \\
\midrule
CRPS$\downarrow$            & 0.248$_{\scriptstyle \pm 0.002}$ & 0.213$_{\scriptstyle \pm 0.000}$ & 0.217$_{\scriptstyle \pm 0.000}$ & 0.158$_{\scriptstyle \pm 0.000}$ & 0.168$_{\scriptstyle \pm 0.000}$ & 0.176$_{\scriptstyle \pm 0.000}$ & 0.154$_{\scriptstyle \pm 0.000}$ \\
QICE$\downarrow$            & 0.160$_{\scriptstyle \pm 0.000}$ & 0.140$_{\scriptstyle \pm 0.001}$ & 0.144$_{\scriptstyle \pm 0.000}$ & 0.011$_{\scriptstyle \pm 0.000}$ & 0.069$_{\scriptstyle \pm 0.000}$ & 0.007$_{\scriptstyle \pm 0.000}$ & 0.005$_{\scriptstyle \pm 0.000}$ \\
ProbCorr$\downarrow$        & 0.317$_{\scriptstyle \pm 0.004}$ & 0.314$_{\scriptstyle \pm 0.000}$ & 0.264$_{\scriptstyle \pm 0.000}$ & 0.266$_{\scriptstyle \pm 0.000}$ & 0.280$_{\scriptstyle \pm 0.000}$ & 0.305$_{\scriptstyle \pm 0.001}$ & 0.261$_{\scriptstyle \pm 0.000}$ \\
Conditional FID$\downarrow$ & 3.516$_{\scriptstyle \pm 0.040}$ & 8.384$_{\scriptstyle \pm 1.238}$ & 2.522$_{\scriptstyle \pm 0.012}$ & 1.916$_{\scriptstyle \pm 0.002}$ & 2.020$_{\scriptstyle \pm 0.002}$ & 2.812$_{\scriptstyle \pm 0.387}$ & 1.856$_{\scriptstyle \pm 0.003}$ \\
\bottomrule
\end{tabular}
\end{table}
\begin{wraptable}{r}{0.66\textwidth}
\vspace{-15pt}
\centering
\caption{Ablation study on functional connectivity recovery.}
\label{tab:ablation_main}
\setlength{\tabcolsep}{4pt}
\footnotesize
\begin{tabular}{llcccc}
\toprule
Dataset & Metric & FAST-Brain & w/o GCN & $\epsilon$-pred. & $v$-pred.  \\
\midrule
\multirow{2}{*}{RNN}
& Corr$\uparrow$ & 0.938$_{\scriptstyle \pm 0.006}$ & 0.932$_{\scriptstyle \pm 0.006}$ & 0.928$_{\scriptstyle \pm 0.011}$ & 0.919$_{\scriptstyle \pm 0.014}$ \\
& MAE$\downarrow$ & 0.028$_{\scriptstyle \pm 0.002}$ & 0.029$_{\scriptstyle \pm 0.002}$ & 0.028$_{\scriptstyle \pm 0.002}$ & 0.032$_{\scriptstyle \pm 0.003}$ \\
\midrule
\multirow{2}{*}{SDDEs}
& Corr$\uparrow$ & 0.975$_{\scriptstyle \pm 0.009}$ & 0.827$_{\scriptstyle \pm 0.002}$ & 0.862$_{\scriptstyle \pm 0.004}$ & 0.822$_{\scriptstyle \pm 0.004}$ \\
& MAE$\downarrow$ & 0.115$_{\scriptstyle \pm 0.023}$ &  0.305$_{\scriptstyle \pm 0.001}$  & 0.130$_{\scriptstyle \pm 0.004}$ & 0.292$_{\scriptstyle \pm 0.003}$ \\
\midrule
\multirow{2}{*}{HCP}
& Corr$\uparrow$ & 0.938$_{\scriptstyle \pm 0.005}$ & 0.867$_{\scriptstyle \pm 0.015}$ & 0.799$_{\scriptstyle \pm 0.009}$ & 0.856$_{\scriptstyle \pm 0.014}$ \\
& MAE$\downarrow$ & 0.073$_{\scriptstyle \pm 0.008}$ & 0.133$_{\scriptstyle \pm 0.017}$ & 0.313$_{\scriptstyle \pm 0.004}$ & 0.209$_{\scriptstyle \pm 0.017}$ \\
\bottomrule
\end{tabular}
\end{wraptable}

\textbf{Ablation Study.} As discussed earlier, FAST-Brain consists of three main ingredients: a GCN for encoding spatial constraints, a Transformer for capturing long-range temporal dependencies, and a direct prediction strategy that predicts the clean BOLD signal. To assess the contributions of the spatial constraints and the direct clean-signal prediction objective, we conduct ablation studies on all three datasets. We compare the full FAST-Brain model with three variants: one that removes the GCN module, denoted by \emph{w/o GCN}; one that replaces direct clean-signal prediction with the conventional DDPM noise-prediction objective, denoted by \emph{\(\epsilon\)-prediction}; and one that replaces it with standard velocity prediction in flow matching~\citep{lipman2022flow}, denoted by \emph{\(v\)-prediction}.

Table~\ref{tab:ablation_main} reports functional connectivity recovery for these variants. The full FAST-Brain model consistently achieves the best performance across all three datasets, indicating that both the direct clean-signal prediction objective and the graph-based spatial priors contribute to accurate FC recovery. The benefit of directly predicting clean BOLD signals is especially evident on the higher-dimensional SDDEs and HCP datasets, where replacing this objective with \(\epsilon\)-prediction or \(v\)-prediction leads to noticeably worse performance. This observation is consistent with our theoretical result that the Bayes-optimal denoiser depends on the noisy observation only through its intrinsic low-dimensional projection. Removing the GCN module further degrades performance, particularly on SDDEs and HCP, demonstrating that anatomical and network-level priors provide useful structural guidance. Together, these results show that FAST-Brain benefits from leveraging both low-dimensional clean-signal structure and biologically informed spatial constraints.

\Needspace{30\baselineskip}
\begin{wrapfigure}{r}{0.62\textwidth}
\vspace{-6pt}
\centering
\includegraphics[width=0.60\textwidth]{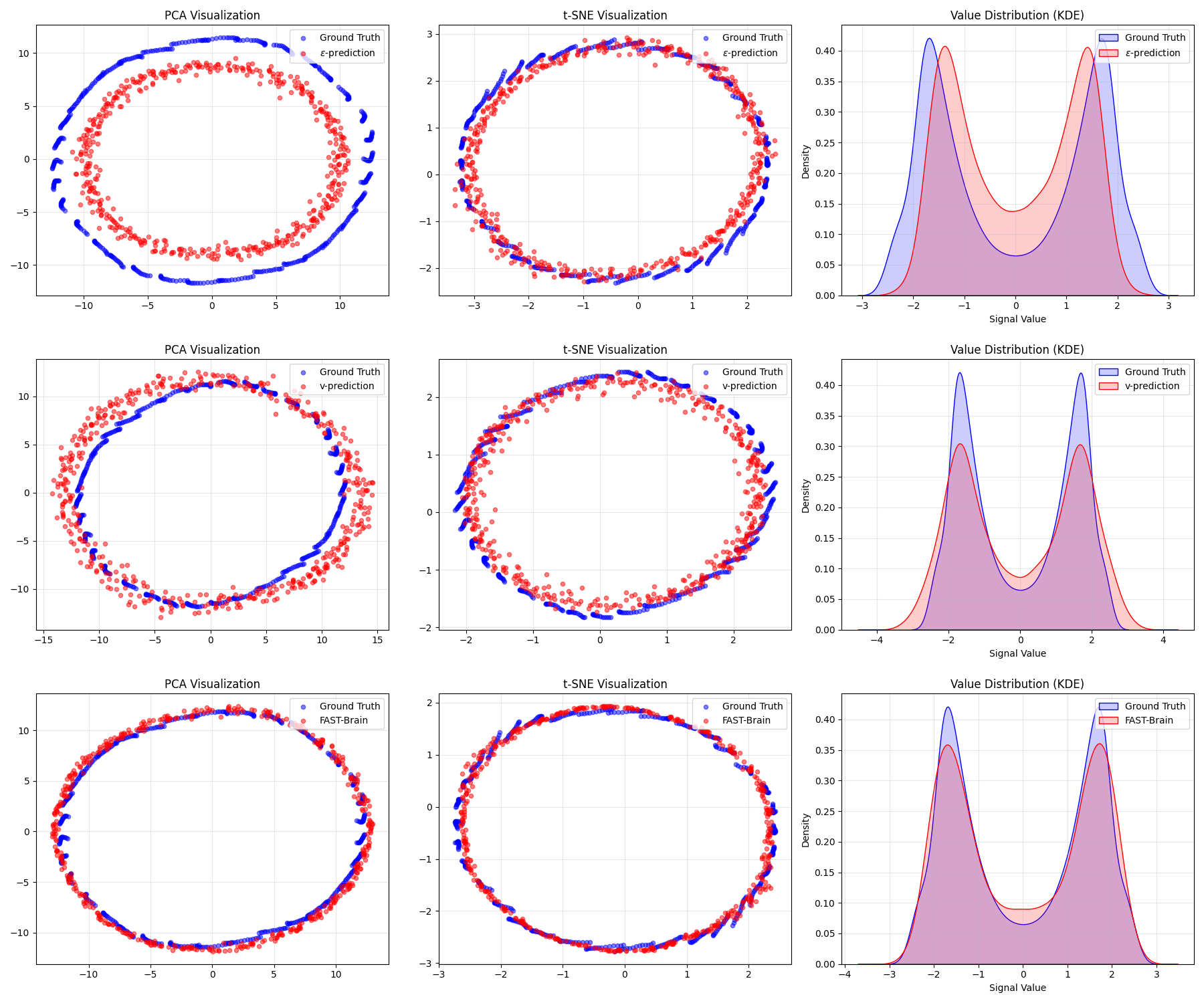}
\caption{Latent-space and value-distribution analyses on the SDDEs dataset.}
\label{fig:latent_sddes}
\end{wrapfigure}

\textbf{Latent-space analysis.}
We further visualize the generated SDDEs samples using PCA~\citep{abdi2010principal} and t-SNE~\citep{van2008visualizing} to assess whether different parameterizations preserve the intrinsic signal geometry. Specifically, we compare FAST-Brain, which predicts the clean signal, with \(\epsilon\)-prediction, which predicts noise, and \(v\)-prediction, which predicts velocity. We also visualize the kernel density estimate (KDE) of the marginal signal-value distribution and compare it with that of the ground-truth samples. Figure~\ref{fig:latent_sddes} shows that $\epsilon$-prediction produces a smaller and misaligned ring in the PCA space, while $v$-prediction produces more dispersed samples and an over-smoothed marginal distribution. In contrast, FAST-Brain closely matches the ground-truth circular manifold in both PCA and t-SNE and better preserves the bimodal KDE shape. This suggests that FAST-Brain recovers both the latent trajectory structure and the signal-value distribution more faithfully. These empirical results support our theoretical finding that direct clean-signal prediction can better recover the low-dimensional signal than noise or velocity prediction.

\section{Conclusion}

We presented FAST-Brain, a flow-aligned spatio-temporal surrogate brain model for resting-state fMRI generation. FAST-Brain directly predicts clean BOLD signals while incorporating graph-based anatomical priors and handling long-range temporal dependencies, enabling more faithful recovery of functional connectivity, effective connectivity, and low-dimensional signal geometry. Although this work focuses exclusively on neuroimaging, it raises an interesting question for future work: whether direct clean-signal prediction combined with graph-guided structural priors can also benefit other structurally constrained spatio-temporal systems, such as traffic, climate, weather, or power-grid dynamics.

In addition, our theory characterizes the low-dimensional structure of the Bayes-optimal denoiser and the approximation error of the proposed model, but a finite-sample statistical learning analysis remains open. Deriving nonasymptotic bounds on statistical estimation errors would further clarify when and why direct clean-signal prediction improves high-dimensional spatio-temporal generation.

\bibliographystyle{plainnat}
\bibliography{references}

\begin{thebibliography}{64}
\providecommand{\natexlab}[1]{#1}
\providecommand{\url}[1]{\texttt{#1}}
\expandafter\ifx\csname urlstyle\endcsname\relax
  \providecommand{\doi}[1]{doi: #1}\else
  \providecommand{\doi}{doi: \begingroup \urlstyle{rm}\Url}\fi

\bibitem[Abdi and Williams(2010)]{abdi2010principal}
Herv{\'e} Abdi and Lynne~J Williams.
\newblock Principal component analysis.
\newblock \emph{Wiley interdisciplinary reviews: computational statistics}, 2\penalty0 (4):\penalty0 433--459, 2010.

\bibitem[Alcaraz and Strodthoff(2022)]{alcaraz2022diffusion}
Juan Miguel~Lopez Alcaraz and Nils Strodthoff.
\newblock Diffusion-based time series imputation and forecasting with structured state space models.
\newblock \emph{arXiv preprint arXiv:2208.09399}, 2022.

\bibitem[Bengio et~al.(2015)Bengio, Vinyals, Jaitly, and Shazeer]{bengio2015scheduled}
Samy Bengio, Oriol Vinyals, Navdeep Jaitly, and Noam Shazeer.
\newblock Scheduled sampling for sequence prediction with recurrent neural networks.
\newblock \emph{Advances in neural information processing systems}, 28, 2015.

\bibitem[Biswal et~al.(1995)Biswal, Yetkin, Haughton, and Hyde]{Biswal1995}
Bharat Biswal, F.~Zerrin Yetkin, Victor~M. Haughton, and James~S. Hyde.
\newblock Functional connectivity in the motor cortex of resting human brain using echo-planar mri.
\newblock \emph{Magnetic Resonance in Medicine}, 34\penalty0 (4):\penalty0 537--541, 1995.
\newblock \doi{10.1002/mrm.1910340409}.

\bibitem[Breakspear(2017)]{breakspear2017dynamic}
Michael Breakspear.
\newblock Dynamic models of large-scale brain activity.
\newblock \emph{Nature neuroscience}, 20\penalty0 (3):\penalty0 340--352, 2017.

\bibitem[Calhoun et~al.(2014)Calhoun, Miller, Pearlson, and Adal{\i}]{calhoun2014chronnectome}
Vince~D Calhoun, Robyn Miller, Godfrey Pearlson, and Tulay Adal{\i}.
\newblock The chronnectome: time-varying connectivity networks as the next frontier in fmri data discovery.
\newblock \emph{Neuron}, 84\penalty0 (2):\penalty0 262--274, 2014.

\bibitem[Chen et~al.(2023)Chen, Huang, Zhao, and Wang]{chen2023score}
Minshuo Chen, Kaixuan Huang, Tuo Zhao, and Mengdi Wang.
\newblock Score approximation, estimation and distribution recovery of diffusion models on low-dimensional data.
\newblock In \emph{International Conference on Machine Learning}, pages 4672--4712. PMLR, 2023.

\bibitem[Deco et~al.(2013)Deco, Ponce-Alvarez, Mantini, Romani, Hagmann, and Corbetta]{deco2013resting}
Gustavo Deco, Adri{\'a}n Ponce-Alvarez, Dante Mantini, Gian~Luca Romani, Patric Hagmann, and Maurizio Corbetta.
\newblock Resting-state functional connectivity emerges from structurally and dynamically shaped slow linear fluctuations.
\newblock \emph{Journal of Neuroscience}, 33\penalty0 (27):\penalty0 11239--11252, 2013.

\bibitem[Deco et~al.(2021)Deco, Vidaurre, and Kringelbach]{deco2021revisiting}
Gustavo Deco, Diego Vidaurre, and Morten~L Kringelbach.
\newblock Revisiting the global workspace orchestrating the hierarchical organization of the human brain.
\newblock \emph{Nature human behaviour}, 5\penalty0 (4):\penalty0 497--511, 2021.

\bibitem[Desai et~al.(2021)Desai, Freeman, Wang, and Beaver]{desai2021timevae}
Abhyuday Desai, Cynthia Freeman, Zuhui Wang, and Ian Beaver.
\newblock Timevae: A variational auto-encoder for multivariate time series generation.
\newblock \emph{arXiv preprint arXiv:2111.08095}, 2021.

\bibitem[Dong et~al.(2024)Dong, Li, Wu, Nguyen, Chong, Ji, Tong, Chen, and Zhou]{Dong2024}
Zijian Dong, Ruilin Li, Yilei Wu, Thuan~Tinh Nguyen, Joanna Su~Xian Chong, Fang Ji, Nathanael Ren~Jie Tong, Christopher Li~Hsian Chen, and Juan~Helen Zhou.
\newblock Brain-jepa: Brain dynamics foundation model with gradient positioning and spatiotemporal masking.
\newblock \emph{arXiv preprint arXiv:2409.19407}, 2024.
\newblock \doi{10.48550/arXiv.2409.19407}.
\newblock URL \url{https://arxiv.org/abs/2409.19407}.

\bibitem[Efron(2011)]{efron2011tweedie}
Bradley Efron.
\newblock Tweedie’s formula and selection bias.
\newblock \emph{Journal of the American Statistical Association}, 106\penalty0 (496):\penalty0 1602--1614, 2011.

\bibitem[Fawcett(2006)]{fawcett2006introduction}
Tom Fawcett.
\newblock An introduction to roc analysis.
\newblock \emph{Pattern recognition letters}, 27\penalty0 (8):\penalty0 861--874, 2006.

\bibitem[Friston(2011)]{Friston2011}
Karl~J. Friston.
\newblock Functional and effective connectivity: A review.
\newblock \emph{Brain Connectivity}, 1\penalty0 (1):\penalty0 13--36, 2011.
\newblock \doi{10.1089/brain.2011.0008}.

\bibitem[Friston et~al.(2014)Friston, Kahan, Biswal, and Razi]{friston2014dcm}
Karl~J Friston, Joshua Kahan, Bharat Biswal, and Adeel Razi.
\newblock A dcm for resting state fmri.
\newblock \emph{Neuroimage}, 94:\penalty0 396--407, 2014.

\bibitem[Gallego et~al.(2017)Gallego, Perich, Miller, and Solla]{gallego2017neural}
Juan~A Gallego, Matthew~G Perich, Lee~E Miller, and Sara~A Solla.
\newblock Neural manifolds for the control of movement.
\newblock \emph{Neuron}, 94\penalty0 (5):\penalty0 978--984, 2017.

\bibitem[Geng et~al.(2019)Geng, Li, Wang, Zhang, Yang, Ye, and Liu]{geng2019spatiotemporal}
Xu~Geng, Yaguang Li, Leye Wang, Lingyu Zhang, Qiang Yang, Jieping Ye, and Yan Liu.
\newblock Spatiotemporal multi-graph convolution network for ride-hailing demand forecasting.
\newblock In \emph{Proceedings of the AAAI conference on artificial intelligence}, volume~33, pages 3656--3663, 2019.

\bibitem[Granger(1969)]{granger1969investigating}
Clive~WJ Granger.
\newblock Investigating causal relations by econometric models and cross-spectral methods.
\newblock \emph{Econometrica: journal of the Econometric Society}, pages 424--438, 1969.

\bibitem[Hagmann et~al.(2008)Hagmann, Cammoun, Gigandet, Meuli, Honey, Wedeen, and Sporns]{hagmann2008mapping}
Patric Hagmann, Leila Cammoun, Xavier Gigandet, Reto Meuli, Christopher~J Honey, Van~J Wedeen, and Olaf Sporns.
\newblock Mapping the structural core of human cerebral cortex.
\newblock \emph{PLoS biology}, 6\penalty0 (7):\penalty0 e159, 2008.

\bibitem[Han et~al.(2024)Han, Yang, Huang, Kan, Yang, Guo, He, Zhan, Sun, Wang, and Yang]{han2024brainode}
Kaiqiao Han, Yi~Yang, Zijie Huang, Xuan Kan, Yang Yang, Ying Guo, Lifang He, Liang Zhan, Yizhou Sun, Wei Wang, and Carl Yang.
\newblock {BrainODE}: Dynamic brain signal analysis via graph-aided neural ordinary differential equations.
\newblock In \emph{2024 IEEE EMBS International Conference on Biomedical and Health Informatics (BHI)}. IEEE, 2024.

\bibitem[Han et~al.(2022)Han, Zheng, and Zhou]{han2022card}
Xizewen Han, Huangjie Zheng, and Mingyuan Zhou.
\newblock Card: Classification and regression diffusion models.
\newblock \emph{Advances in Neural Information Processing Systems}, 35:\penalty0 18100--18115, 2022.

\bibitem[Ho et~al.(2020)Ho, Jain, and Abbeel]{ho2020denoising}
Jonathan Ho, Ajay Jain, and Pieter Abbeel.
\newblock Denoising diffusion probabilistic models.
\newblock \emph{Advances in neural information processing systems}, 33:\penalty0 6840--6851, 2020.

\bibitem[Honey et~al.(2009)Honey, Sporns, Cammoun, Gigandet, Thiran, Meuli, and Hagmann]{honey2009predicting}
Christopher~J Honey, Olaf Sporns, Leila Cammoun, Xavier Gigandet, Jean-Philippe Thiran, Reto Meuli, and Patric Hagmann.
\newblock Predicting human resting-state functional connectivity from structural connectivity.
\newblock \emph{Proceedings of the National Academy of Sciences}, 106\penalty0 (6):\penalty0 2035--2040, 2009.

\bibitem[Hoogeboom et~al.(2025)Hoogeboom, Mensink, Heek, Lamerigts, Gao, and Salimans]{hoogeboom2025simpler}
Emiel Hoogeboom, Thomas Mensink, Jonathan Heek, Kay Lamerigts, Ruiqi Gao, and Tim Salimans.
\newblock Simpler diffusion: 1.5 fid on imagenet512 with pixel-space diffusion.
\newblock In \emph{Proceedings of the Computer Vision and Pattern Recognition Conference}, pages 18062--18071, 2025.

\bibitem[Hu et~al.(2024)Hu, Wang, Ding, Wu, Zhang, Li, Wang, Zhang, Li, and Chen]{hu2024flowts}
Yang Hu, Xiao Wang, Zezhen Ding, Lirong Wu, Huatian Zhang, Stan~Z Li, Sheng Wang, Jiheng Zhang, Ziyun Li, and Tianlong Chen.
\newblock Flowts: Time series generation via rectified flow.
\newblock \emph{arXiv preprint arXiv:2411.07506}, 2024.

\bibitem[Kang et~al.(2026)Kang, Nichols, Li, Lindquist, and Zhu]{kang2026statistical}
Jian Kang, Thomas Nichols, Lexin Li, Martin~A Lindquist, and Hongtu Zhu.
\newblock Statistical opportunities in neuroimaging.
\newblock \emph{arXiv preprint arXiv:2602.12974}, 2026.

\bibitem[Karras et~al.(2022)Karras, Aittala, Aila, and Laine]{karras2022elucidating}
Tero Karras, Miika Aittala, Timo Aila, and Samuli Laine.
\newblock Elucidating the design space of diffusion-based generative models.
\newblock \emph{Advances in neural information processing systems}, 35:\penalty0 26565--26577, 2022.

\bibitem[Kipf and Welling(2016)]{kipf2016semi}
Thomas~N Kipf and Max Welling.
\newblock Semi-supervised classification with graph convolutional networks.
\newblock \emph{arXiv preprint arXiv:1609.02907}, 2016.

\bibitem[Kong et~al.(2020)Kong, Ping, Huang, Zhao, and Catanzaro]{kong2020diffwave}
Zhifeng Kong, Wei Ping, Jiaji Huang, Kexin Zhao, and Bryan Catanzaro.
\newblock Diffwave: A versatile diffusion model for audio synthesis.
\newblock \emph{arXiv preprint arXiv:2009.09761}, 2020.

\bibitem[Li et~al.(2018)Li, Xiao, Zhou, and Cai]{PhysRevE.97.052216}
Songting Li, Yanyang Xiao, Douglas Zhou, and David Cai.
\newblock Causal inference in nonlinear systems: Granger causality versus time-delayed mutual information.
\newblock \emph{Phys. Rev. E}, 97:\penalty0 052216, May 2018.
\newblock \doi{10.1103/PhysRevE.97.052216}.
\newblock URL \url{https://link.aps.org/doi/10.1103/PhysRevE.97.052216}.

\bibitem[Li and He(2025)]{li2025back}
Tianhong Li and Kaiming He.
\newblock Back to basics: Let denoising generative models denoise.
\newblock \emph{arXiv preprint arXiv:2511.13720}, 2025.

\bibitem[Lindquist et~al.(2025)Lindquist, Caffo, Kang, Simpson, Pinto, Ting, Shojaie, Guindani, and Ombao]{lindquist2025statistics}
Martin~A Lindquist, Brian Caffo, Jian Kang, Sean Simpson, Marco Pinto, Chee-Ming Ting, Ali Shojaie, Michele Guindani, and Hernando Ombao.
\newblock Statistics, data science, and the connectome.
\newblock \emph{Statistics and Data Science in Imaging}, 2\penalty0 (1):\penalty0 2569894, 2025.

\bibitem[Lipman et~al.(2022)Lipman, Chen, Ben-Hamu, Nickel, and Le]{lipman2022flow}
Yaron Lipman, Ricky~TQ Chen, Heli Ben-Hamu, Maximilian Nickel, and Matt Le.
\newblock Flow matching for generative modeling.
\newblock \emph{arXiv preprint arXiv:2210.02747}, 2022.

\bibitem[Lu et~al.(2023)Lu, Zeng, Du, Zhang, Xiang, Wang, Wang, Ji, Hou, Wang, Liu, Chen, Zheng, Xu, and Feng]{Lu2023}
Wenlian Lu, Longbin Zeng, Xin Du, Wenyong Zhang, Shitong Xiang, Huarui Wang, Jiexiang Wang, Mingda Ji, Yubo Hou, Minglong Wang, Yuhao Liu, Zhongyu Chen, Qibao Zheng, Ningsheng Xu, and Jianfeng Feng.
\newblock Digital twin brain: a simulation and assimilation platform for whole human brain.
\newblock \emph{arXiv preprint arXiv:2308.01241}, 2023.
\newblock \doi{10.48550/arXiv.2308.01241}.
\newblock URL \url{https://arxiv.org/abs/2308.01241}.

\bibitem[Luo et~al.(2025)Luo, Peng, Liang, Cai, Xu, Li, Hu, Zhou, and Liu]{Luo2025}
Zixiang Luo, Kaining Peng, Zhichao Liang, Shengyuan Cai, Chenyu Xu, Dan Li, Yu~Hu, Changsong Zhou, and Quanying Liu.
\newblock Mapping effective connectivity by virtually perturbing a surrogate brain.
\newblock \emph{Nature Methods}, 22\penalty0 (6):\penalty0 1376--1385, 2025.
\newblock \doi{10.1038/s41592-025-02654-x}.

\bibitem[Matheson and Winkler(1976)]{matheson1976scoring}
James~E Matheson and Robert~L Winkler.
\newblock Scoring rules for continuous probability distributions.
\newblock \emph{Management science}, 22\penalty0 (10):\penalty0 1087--1096, 1976.

\bibitem[Ni et~al.(2021)Ni, Szpruch, Sabate-Vidales, Xiao, Wiese, and Liao]{ni2021sig}
Hao Ni, Lukasz Szpruch, Marc Sabate-Vidales, Baoren Xiao, Magnus Wiese, and Shujian Liao.
\newblock Sig-wasserstein gans for time series generation.
\newblock In \emph{Proceedings of the Second ACM International Conference on AI in Finance}, pages 1--8, 2021.

\bibitem[Orlichenko et~al.(2024)Orlichenko, Qu, Zhou, Liu, Deng, Ding, Stephen, Wilson, Calhoun, and Wang]{orlichenko2024demographic}
Anton Orlichenko, Gang Qu, Ziyu Zhou, Anqi Liu, Hong-Wen Deng, Zhengming Ding, Julia~M. Stephen, Tony~W. Wilson, Vince~D. Calhoun, and Yu-Ping Wang.
\newblock A demographic-conditioned variational autoencoder for fmri distribution sampling and removal of confounds.
\newblock \emph{bioRxiv}, 2024.
\newblock \doi{10.1101/2024.05.16.594528}.

\bibitem[Park and Friston(2013)]{park2013structural}
Hae-Jeong Park and Karl Friston.
\newblock Structural and functional brain networks: from connections to cognition.
\newblock \emph{Science}, 342\penalty0 (6158):\penalty0 1238411, 2013.

\bibitem[Perich et~al.(2020)Perich, Arlt, Soares, Young, Mosher, Minxha, Carter, Rutishauser, Rudebeck, Harvey, et~al.]{perich2020inferring}
Matthew~G Perich, Charlotte Arlt, Sofia Soares, Megan~E Young, Clayton~P Mosher, Juri Minxha, Eugene Carter, Ueli Rutishauser, Peter~H Rudebeck, Christopher~D Harvey, et~al.
\newblock Inferring brain-wide interactions using data-constrained recurrent neural network models.
\newblock \emph{BioRxiv}, pages 2020--12, 2020.

\bibitem[Pezon et~al.(2024)Pezon, Schmutz, and Gerstner]{pezon2024linking}
Louis Pezon, Valentin Schmutz, and Wulfram Gerstner.
\newblock Linking neural manifolds to circuit structure in recurrent networks.
\newblock \emph{bioRxiv}, pages 2024--02, 2024.

\bibitem[Saito and Rehmsmeier(2015)]{saito2015precision}
Takaya Saito and Marc Rehmsmeier.
\newblock The precision-recall plot is more informative than the roc plot when evaluating binary classifiers on imbalanced datasets.
\newblock \emph{PloS one}, 10\penalty0 (3):\penalty0 e0118432, 2015.

\bibitem[Sanz~Leon et~al.(2013)Sanz~Leon, Knock, Woodman, Domide, Mersmann, McIntosh, and Jirsa]{sanz2013virtual}
Paula Sanz~Leon, Stuart~A Knock, M~Marmaduke Woodman, Lia Domide, Jochen Mersmann, Anthony~R McIntosh, and Viktor Jirsa.
\newblock The virtual brain: a simulator of primate brain network dynamics.
\newblock \emph{Frontiers in neuroinformatics}, 7:\penalty0 10, 2013.

\bibitem[Shen and Kwok(2023)]{shen2023non}
Lifeng Shen and James Kwok.
\newblock Non-autoregressive conditional diffusion models for time series prediction.
\newblock In \emph{International Conference on Machine Learning}, pages 31016--31029. PMLR, 2023.

\bibitem[Su et~al.(2024)Su, Ahmed, Lu, Pan, Bo, and Liu]{su2024roformer}
Jianlin Su, Murtadha Ahmed, Yu~Lu, Shengfeng Pan, Wen Bo, and Yunfeng Liu.
\newblock Roformer: Enhanced transformer with rotary position embedding.
\newblock \emph{Neurocomputing}, 568:\penalty0 127063, 2024.

\bibitem[Tamir et~al.(2024)Tamir, Laabid, Heinonen, Garg, and Solin]{tamir2024conditional}
Ella Tamir, Najwa Laabid, Markus Heinonen, Vikas Garg, and Arno Solin.
\newblock Conditional flow matching for time series modelling.
\newblock In \emph{ICML 2024 Workshop on Structured Probabilistic Inference $\{$$\backslash$\&$\}$ Generative Modeling}, 2024.

\bibitem[Tan et~al.(2024)Tan, Loo, Ting, Noman, Phan, and Ombao]{tan2024brainfc}
Yee-Fan Tan, Junn-Yong Loo, Chee-Ming Ting, Fuad Noman, Rapha{\"e}l C-W Phan, and Hernando Ombao.
\newblock Brainfc-cgan: A conditional generative adversarial network for brain functional connectivity augmentation and aging synthesis.
\newblock In \emph{ICASSP 2024-2024 IEEE International Conference on Acoustics, Speech and Signal Processing (ICASSP)}, pages 1511--1515. IEEE, 2024.

\bibitem[Tashiro et~al.(2021)Tashiro, Song, Song, and Ermon]{tashiro2021csdi}
Yusuke Tashiro, Jiaming Song, Yang Song, and Stefano Ermon.
\newblock Csdi: Conditional score-based diffusion models for probabilistic time series imputation.
\newblock \emph{Advances in neural information processing systems}, 34:\penalty0 24804--24816, 2021.

\bibitem[Tew et~al.(2025)Tew, Loo, Tan, Tang, Ombao, Noman, Phan, and Ting]{tew2025t2idiff}
Hwa~Hui Tew, Junn~Yong Loo, Yee-Fan Tan, Xinyu Tang, Hernando Ombao, Fuad Noman, Rapha{\"e}l C.-W. Phan, and Chee-Ming Ting.
\newblock T2i-diff: fmri signal generation via time-frequency image transform and classifier-free denoising diffusion models.
\newblock In \emph{Medical Image Computing and Computer Assisted Intervention -- MICCAI 2025}, pages 640--650. Springer, 2025.

\bibitem[Tew et~al.(2026)Tew, Loo, Yu, Lau, Fan, Ombao, Phan, Tan, and Ting]{tew2026functional}
Hwa~Hui Tew, Junn~Yong Loo, Leong~Fang Yu, Julia~K. Lau, Ding Fan, Hernando Ombao, Raphael~CW Phan, Chee~Pin Tan, and Chee-Ming Ting.
\newblock Functional {MRI} time series generation via wavelet-based image transform and spectral flow matching for brain disorder identification.
\newblock In \emph{The Fourteenth International Conference on Learning Representations}, 2026.
\newblock URL \url{https://openreview.net/forum?id=Dgphd9qizu}.

\bibitem[Thomas et~al.(2022)Thomas, Re, and Poldrack]{Thomas2022}
Armin~W. Thomas, Christopher Re, and Russell~A. Poldrack.
\newblock Self-supervised learning of brain dynamics from broad neuroimaging data.
\newblock \emph{arXiv preprint arXiv:2206.11417}, 2022.
\newblock \doi{10.48550/arXiv.2206.11417}.
\newblock URL \url{https://arxiv.org/abs/2206.11417}.

\bibitem[Tu et~al.(2019)Tu, Paisley, Haufe, and Sajda]{tu2019state}
Tao Tu, John Paisley, Stefan Haufe, and Paul Sajda.
\newblock A state-space model for inferring effective connectivity of latent neural dynamics from simultaneous eeg/fmri.
\newblock \emph{Advances in Neural Information Processing Systems}, 32, 2019.

\bibitem[Van Den~Heuvel and Pol(2010)]{van2010exploring}
Martijn~P Van Den~Heuvel and Hilleke E~Hulshoff Pol.
\newblock Exploring the brain network: a review on resting-state fmri functional connectivity.
\newblock \emph{European neuropsychopharmacology}, 20\penalty0 (8):\penalty0 519--534, 2010.

\bibitem[Van~der Maaten and Hinton(2008)]{van2008visualizing}
Laurens Van~der Maaten and Geoffrey Hinton.
\newblock Visualizing data using t-sne.
\newblock \emph{Journal of machine learning research}, 9\penalty0 (11), 2008.

\bibitem[Van~Essen et~al.(2013)Van~Essen, Smith, Barch, Behrens, Yacoub, Ugurbil, Consortium, et~al.]{van2013wu}
David~C Van~Essen, Stephen~M Smith, Deanna~M Barch, Timothy~EJ Behrens, Essa Yacoub, Kamil Ugurbil, Wu-Minn~HCP Consortium, et~al.
\newblock The wu-minn human connectome project: an overview.
\newblock \emph{Neuroimage}, 80:\penalty0 62--79, 2013.

\bibitem[Vaswani et~al.(2017)Vaswani, Shazeer, Parmar, Uszkoreit, Jones, Gomez, Kaiser, and Polosukhin]{vaswani2017attention}
Ashish Vaswani, Noam Shazeer, Niki Parmar, Jakob Uszkoreit, Llion Jones, Aidan~N Gomez, {\L}ukasz Kaiser, and Illia Polosukhin.
\newblock Attention is all you need.
\newblock \emph{Advances in neural information processing systems}, 30, 2017.

\bibitem[Vincent et~al.(2010)Vincent, Larochelle, Lajoie, Bengio, Manzagol, and Bottou]{vincent2010stacked}
Pascal Vincent, Hugo Larochelle, Isabelle Lajoie, Yoshua Bengio, Pierre-Antoine Manzagol, and L{\'e}on Bottou.
\newblock Stacked denoising autoencoders: Learning useful representations in a deep network with a local denoising criterion.
\newblock \emph{Journal of machine learning research}, 11\penalty0 (12), 2010.

\bibitem[Wein et~al.(2022)Wein, Sch{\"u}ller, Tom{\'e}, Malloni, Greenlee, and Lang]{wein2022forecasting}
Simon Wein, A.~Sch{\"u}ller, Ana~M. Tom{\'e}, Wilhelm~M. Malloni, Mark~W. Greenlee, and Elmar~W. Lang.
\newblock Forecasting brain activity based on models of spatiotemporal brain dynamics: a comparison of graph neural network architectures.
\newblock \emph{Network Neuroscience}, 6\penalty0 (3):\penalty0 665--701, 2022.
\newblock \doi{10.1162/netn_a_00252}.

\bibitem[Xu et~al.(2020)Xu, Wenliang, Munn, and Acciaio]{xu2020cot}
Tianlin Xu, Li~Kevin Wenliang, Michael Munn, and Beatrice Acciaio.
\newblock Cot-gan: Generating sequential data via causal optimal transport.
\newblock \emph{Advances in neural information processing systems}, 33:\penalty0 8798--8809, 2020.

\bibitem[Yoon et~al.(2019)Yoon, Jarrett, and Van~der Schaar]{yoon2019time}
Jinsung Yoon, Daniel Jarrett, and Mihaela Van~der Schaar.
\newblock Time-series generative adversarial networks.
\newblock \emph{Advances in neural information processing systems}, 32, 2019.

\bibitem[Yuan and Qiao(2024)]{yuan2024diffusion}
Xinyu Yuan and Yan Qiao.
\newblock Diffusion-ts: Interpretable diffusion for general time series generation.
\newblock \emph{arXiv preprint arXiv:2403.01742}, 2024.

\bibitem[Yue et~al.(2022)Yue, Wang, Duan, Yang, Huang, Tong, and Xu]{yue2022ts2vec}
Zhihan Yue, Yujing Wang, Juanyong Duan, Tianmeng Yang, Congrui Huang, Yunhai Tong, and Bixiong Xu.
\newblock Ts2vec: Towards universal representation of time series.
\newblock In \emph{Proceedings of the AAAI conference on artificial intelligence}, volume~36, pages 8980--8987, 2022.

\bibitem[Zhu et~al.(2014)Zhu, Zhang, Jiang, Hu, Chen, Yang, Lv, Han, Guo, and Liu]{zhu2014fusing}
Dajiang Zhu, Tuo Zhang, Xi~Jiang, Xintao Hu, Hanbo Chen, Ning Yang, Jinglei Lv, Junwei Han, Lei Guo, and Tianming Liu.
\newblock Fusing dti and fmri data: a survey of methods and applications.
\newblock \emph{NeuroImage}, 102:\penalty0 184--191, 2014.

\bibitem[Zhu et~al.(2023)Zhu, Li, and Zhao]{zhu2023statistical}
Hongtu Zhu, Tengfei Li, and Bingxin Zhao.
\newblock Statistical learning methods for neuroimaging data analysis with applications.
\newblock \emph{Annual review of biomedical data science}, 6\penalty0 (1):\penalty0 73--104, 2023.

\end{thebibliography}

\appendix


\newpage

\appendix

\section{Proof of Theorem 1}

\begin{proof}

We divide the proof into four steps.

\medskip
\noindent
\textbf{Step 1: Structural form of the Bayes-optimal denoiser.}
Fix $(\tau,h,g)\in[\tau_0, 1-\tau_0]\times\mathcal H\times\mathcal G$. By Assumption 1, we have
\[
Y_t=A S_t \quad \text{a.s.},
\]
where $S_t\in\mathbb R^{d\times P}$ and $A^\top A=I_d$. Recall that
\[
Z_\tau=\tau Y_t+(1-\tau)\varepsilon,
\]
where $\varepsilon\in\mathbb R^{N\times P}$ has i.i.d. standard Gaussian entries and is independent of $(Y_t,H_t,\mathcal G)$.

Let
\[
P_A(Z):=AA^\top Z,\qquad P_\perp(Z):=(I_N-AA^\top)Z
\]
denote the orthogonal projections onto $\mathrm{span}(A)$ and its orthogonal complement, respectively  \citep{chen2023score}. Since $Y_t=AS_t$, we have
\[
P_A(Y_t)=Y_t,\qquad P_\perp(Y_t)=0.
\]
Therefore,
\begin{align*}
P_A(Z_\tau)
&=P_A\bigl(\tau Y_t+(1-\tau)\varepsilon\bigr) \\
&=\tau P_A(Y_t)+(1-\tau)P_A(\varepsilon) \\
&=\tau Y_t+(1-\tau)P_A(\varepsilon),
\end{align*}
and similarly
\begin{align*}
P_\perp(Z_\tau)
&=P_\perp\bigl(\tau Y_t+(1-\tau)\varepsilon\bigr) \\
&=\tau P_\perp(Y_t)+(1-\tau)P_\perp(\varepsilon) \\
&=(1-\tau)P_\perp(\varepsilon).
\end{align*}

Because $\varepsilon$ is isotropic Gaussian, the two Gaussian components $P_A(\varepsilon)$ and $P_\perp(\varepsilon)$ are independent. Hence, conditionally on $(H_t=h,\mathcal G=g)$, the conditional density of $Z_\tau$ factorizes into the density of its on-subspace component and the density of its orthogonal component. Writing
\[
Z':=A^\top Z\in\mathbb R^{d\times P},
\]
we may write
\[
p_{\tau\mid h,g}(Z)=p_{\tau\mid h,g}'(A^\top Z)\,p_\tau^\perp(P_\perp Z),
\]
where $p_\tau^\perp$ denotes the Gaussian density of the orthogonal component.

Taking logarithms and differentiating with respect to $Z$, we obtain
\begin{align*}
\nabla_Z\log p_{\tau\mid h,g}(Z)
&=\nabla_Z\log p_{\tau\mid h,g}'(A^\top Z)+\nabla_Z\log p_\tau^\perp(P_\perp Z).
\end{align*}
For the first term, by the chain rule,
\[
\nabla_Z\log p_{\tau\mid h,g}'(A^\top Z)=A\nabla \log p_{\tau\mid h,g}'(A^\top Z).
\]
For the second term, since $P_\perp(Z_\tau)$ is centered Gaussian with covariance $(1-\tau)^2I$ on the orthogonal component, its score is
\[
\nabla_Z\log p_\tau^\perp(P_\perp Z)
=-\frac{1}{(1-\tau)^2}P_\perp(Z)
=-\frac{1}{(1-\tau)^2}(Z-AA^\top Z).
\]
Therefore,
\begin{align*}
\nabla_Z\log p_{\tau\mid h,g}(Z)
&=A\nabla \log p_{\tau\mid h,g}'(A^\top Z)
-\frac{1}{(1-\tau)^2}(Z-AA^\top Z).
\end{align*}

On the other hand, under the Gaussian perturbation model
\[
Z_\tau\mid Y_t\sim\mathcal N(\tau Y_t,(1-\tau)^2I),
\]
Tweedie's formula \citep{efron2011tweedie} yields
\begin{align*}
\nabla_Z\log p_{\tau\mid h,g}(Z)
&=-\frac{1}{(1-\tau)^2}\Bigl(Z-\tau \mathbb E[Y_t\mid Z_\tau=Z,H_t=h,\mathcal G=g]\Bigr) \\
&=-\frac{1}{(1-\tau)^2}\bigl(Z-\tau F^*(Z,\tau,h,g)\bigr).
\end{align*}
Rearranging gives
\begin{align*}
F^*(Z,\tau,h,g)
&=\frac{1}{\tau}\Bigl(Z+(1-\tau)^2\nabla_Z\log p_{\tau\mid h,g}(Z)\Bigr).
\end{align*}
Substituting the score decomposition above, we obtain
\begin{align*}
F^*(Z,\tau,h,g)
&=\frac{1}{\tau}\Bigl(Z+(1-\tau)^2A\nabla \log p_{\tau\mid h,g}'(A^\top Z)-(Z-AA^\top Z)\Bigr) \\
&=\frac{1}{\tau}\Bigl(AA^\top Z+(1-\tau)^2A\nabla \log p_{\tau\mid h,g}'(A^\top Z)\Bigr) \\
&=A\left[\frac{1}{\tau}\Bigl(A^\top Z+(1-\tau)^2\nabla \log p_{\tau\mid h,g}'(A^\top Z)\Bigr)\right].
\end{align*}
Now define
\[
f(Z',\tau,h,g):=\frac{1}{\tau}\Bigl(Z'+(1-\tau)^2\nabla \log p_{\tau\mid h,g}'(Z')\Bigr).
\]
Then
\[
F^*(Z,\tau,h,g)=Af(A^\top Z,\tau,h,g).
\]
This proves the representation formula and shows that the Bayes-optimal denoiser depends only on the low-dimensional projection $A^\top Z$.

\medskip
\noindent
\textbf{Step 2: Reduction of the approximation problem to the latent space.}
Let
\[
f_\theta:\mathbb R^{d\times P}\times[\tau_0, 1-\tau_0]\times\mathcal H\times\mathcal G\to\mathbb R^{d\times P}
\]
be any measurable map, and define the induced predictor
\[
F_{A,\theta}(Z,\tau,h,g):=Af_\theta(A^\top Z,\tau,h,g).
\]
Since $A^\top A=I_d$, the map $M\mapsto AM$ preserves the Frobenius norm. Indeed, for any $M\in\mathbb R^{d\times P}$,
\begin{align*}
\|AM\|_F^2
&=\operatorname{tr}\bigl((AM)^\top(AM)\bigr) \\
&=\operatorname{tr}(M^\top A^\top AM) \\
&=\operatorname{tr}(M^\top M) \\
&=\|M\|_F^2.
\end{align*}
Applying this identity with $M=f_\theta(A^\top Z,\tau,h,g)-f(A^\top Z,\tau,h,g)$, we get
\begin{align*}
\|F_{A,\theta}(Z,\tau,h,g)-F^*(Z,\tau,h,g)\|_F^2
&=\|A(f_\theta(A^\top Z,\tau,h,g)-f(A^\top Z,\tau,h,g))\|_F^2 \\
&=\|f_\theta(A^\top Z,\tau,h,g)-f(A^\top Z,\tau,h,g)\|_F^2.
\end{align*}

Now let $Z_\tau\sim P_{\tau\mid h,g}$ and define $Z_\tau':=A^\top Z_\tau$. Then $Z_\tau'\sim P_{\tau\mid h,g}'$. Hence
\begin{align*}
\|F_{A,\theta}(\cdot,\tau,h,g)-F^*(\cdot,\tau,h,g)\|_{L^2(P_{\tau\mid h,g})}^2
&=\mathbb E\!\left[\|F_{A,\theta}(Z_\tau,\tau,h,g)-F^*(Z_\tau,\tau,h,g)\|_F^2\mid H_t=h,\mathcal G=g\right] \\
&=\mathbb E\!\left[\|f_\theta(Z_\tau',\tau,h,g)-f(Z_\tau',\tau,h,g)\|_F^2\mid H_t=h,\mathcal G=g\right].
\end{align*}
Therefore, it suffices to approximate the latent-space denoiser $f$ in $L^2(P_{\tau\mid h,g}')$, uniformly over $(\tau,h,g)$.

\medskip
\noindent
\textbf{Step 3: Tail control and choice of the truncation radius.}
Fix $\varepsilon>0$. For $R>0$, define
\[
\mathcal C_R:=[-R,R]^{d\times P}.
\]
By Assumption 3,
\[
\|f(Z',\tau,h,g)\|_F\le C_0+C_1\|Z'\|_F
\qquad \text{for all }(Z',\tau,h,g).
\]
If $Z'\in\mathcal C_R$, then $\|Z'\|_F\le \sqrt{dP}\,R$, and therefore
\[
\|f(Z',\tau,h,g)\|_F\le C_0+C_1\sqrt{dP}\,R=:B_R.
\]
In particular, every coordinate of $f(Z',\tau,h,g)$ belongs to $[-B_R,B_R]$ whenever $Z'\in\mathcal C_R$.

Now let $f_\theta$ be any approximating map satisfying the global bound
\[
\|f_\theta(Z',\tau,h,g)\|_F\le \sqrt{dP}\,B_R
\qquad \text{for all }(Z',\tau,h,g).
\]
Define
\[
I_2(\tau,h,g):=\mathbb E\!\left[
\|f_\theta(Z_\tau',\tau,h,g)-f(Z_\tau',\tau,h,g)\|_F^2
\mathbf 1_{\{Z_\tau'\notin \mathcal C_R\}}
\mid H_t=h,\mathcal G=g
\right].
\]
Using $\|a-b\|_F^2\le 2\|a\|_F^2+2\|b\|_F^2$, we get
\begin{align*}
I_2(\tau,h,g)
&\le 2\,\mathbb E\!\left[\|f_\theta(Z_\tau',\tau,h,g)\|_F^2\mathbf 1_{\{Z_\tau'\notin \mathcal C_R\}}\mid H_t=h,\mathcal G=g\right] \\
&\quad +2\,\mathbb E\!\left[\|f(Z_\tau',\tau,h,g)\|_F^2\mathbf 1_{\{Z_\tau'\notin \mathcal C_R\}}\mid H_t=h,\mathcal G=g\right].
\end{align*}
For the first term, by the global bound on $f_\theta$,
\begin{align*}
\mathbb E\!\left[\|f_\theta(Z_\tau',\tau,h,g)\|_F^2\mathbf 1_{\{Z_\tau'\notin \mathcal C_R\}}\mid H_t=h,\mathcal G=g\right]
&\le dP\,B_R^2\,\mathbb P(Z_\tau'\notin\mathcal C_R\mid H_t=h,\mathcal G=g).
\end{align*}
Hence
\begin{align*}
I_2(\tau,h,g)
&\le 2dP\,B_R^2\,\mathbb P(Z_\tau'\notin\mathcal C_R\mid H_t=h,\mathcal G=g) \\
&\quad +2\,\mathbb E\!\left[\|f(Z_\tau',\tau,h,g)\|_F^2\mathbf 1_{\{Z_\tau'\notin \mathcal C_R\}}\mid H_t=h,\mathcal G=g\right].
\end{align*}
Since $Z_\tau'\notin\mathcal C_R$ implies $\|Z_\tau'\|_F>R$, we further obtain
\begin{align*}
I_2(\tau,h,g)
&\le 2dP\,B_R^2\,\mathbb P(\|Z_\tau'\|_F>R\mid H_t=h,\mathcal G=g) \\
&\quad +2\,\mathbb E\!\left[\|f(Z_\tau',\tau,h,g)\|_F^2\mathbf 1_{\{\|Z_\tau'\|_F>R\}}\mid H_t=h,\mathcal G=g\right].
\end{align*}


By the uniform square-integrability condition in Assumption 3, define
\[
T_R :=
\sup_{\tau\in[\tau_0,1-\tau_0],\,h\in\mathcal H,\,g\in\mathcal G}
\mathbb E\!\left[
\|Z'_\tau\|_F^2
\mathbf 1_{\{\|Z'_\tau\|_F>R\}}
\mid H_t=h,\mathcal G=g
\right].
\]
Then $T_R\to 0$ as $R\to\infty$.

Since $\{\|Z'_\tau\|_F>R\}$ implies $\|Z'_\tau\|_F^2\ge R^2$, we have
\[
\mathbb P(\|Z'_\tau\|_F>R\mid H_t=h,\mathcal G=g)
\le
\frac{1}{R^2}
\mathbb E\!\left[
\|Z'_\tau\|_F^2
\mathbf 1_{\{\|Z'_\tau\|_F>R\}}
\mid H_t=h,\mathcal G=g
\right]
\le \frac{T_R}{R^2}.
\]
Moreover, by Assumption 3,
\[
\|f(Z',\tau,h,g)\|_F^2
\le 2C_0^2+2C_1^2\|Z'\|_F^2.
\]
Therefore,
\begin{align*}
&\mathbb E\!\left[
\|f(Z'_\tau,\tau,h,g)\|_F^2
\mathbf 1_{\{\|Z'_\tau\|_F>R\}}
\mid H_t=h,\mathcal G=g
\right] \\
&\le
2C_0^2
\mathbb P(\|Z'_\tau\|_F>R\mid H_t=h,\mathcal G=g)
+
2C_1^2
\mathbb E\!\left[
\|Z'_\tau\|_F^2
\mathbf 1_{\{\|Z'_\tau\|_F>R\}}
\mid H_t=h,\mathcal G=g
\right] \\
&\le
\left(\frac{2C_0^2}{R^2}+2C_1^2\right)T_R .
\end{align*}
Substituting these estimates into the bound for $I_2(\tau,h,g)$ gives
\[
I_2(\tau,h,g)
\le
2dP\,B_R^2\frac{T_R}{R^2}
+
2\left(\frac{2C_0^2}{R^2}+2C_1^2\right)T_R .
\]
Since $B_R=C_0+C_1\sqrt{dP}\,R$, the ratio $B_R^2/R^2$ is uniformly bounded for $R\ge 1$.
Hence, there exists a constant $K>0$, independent of $(\tau,h,g)$, such that
\[
I_2(\tau,h,g)\le K T_R .
\]
Because $T_R\to 0$, we conclude that
\[
\lim_{R\to\infty}\sup_{\tau,h,g}I_2(\tau,h,g)=0.
\]
Therefore, we may choose $R^*>0$ large enough such that
\[
\sup_{\tau,h,g}I_2(\tau,h,g)\le \varepsilon^2
\]
for every approximating map $f_\theta$ satisfying
\[
\|f_\theta(Z',\tau,h,g)\|_F\le \sqrt{dP}\,B_{R^*}.
\]


\medskip
\noindent
\textbf{Step 4: Approximation on the compact region and conclusion.}
Let
\[
\mathcal S_{R^*}:=\mathcal C_{R^*}\times[\tau_0, 1-\tau_0]\times\mathcal H\times\mathcal G.
\]
By Assumption 2, the function $f$ is jointly continuous on the compact set $\mathcal S_{R^*}$. Hence, by the universal approximation theorem for ReLU networks, there exists a ReLU feedforward neural network
\[
g_\theta:\mathbb R^{d\times P}\times[\tau_0, 1-\tau_0]\times\mathcal H\times\mathcal G\to\mathbb R^{d\times P}
\]
such that
\[
\sup_{(Z',\tau,h,g)\in\mathcal S_{R^*}}
\|g_\theta(Z',\tau,h,g)-f(Z',\tau,h,g)\|_\infty\le \varepsilon.
\]

Now define the scalar clipping map
\[
\Pi_{B_{R^*}}(x):=\max\{-B_{R^*},\min(x,B_{R^*})\},
\]
and define $f_\theta$ coordinatewise by
\[
[f_\theta(Z',\tau,h,g)]_{ij}:=\Pi_{B_{R^*}}([g_\theta(Z',\tau,h,g)]_{ij}).
\]
Since $\Pi_{B_{R^*}}$ can be represented exactly by ReLU operations, $f_\theta$ is still a ReLU feedforward neural network. By construction,
\[
\|f_\theta(Z',\tau,h,g)\|_F\le \sqrt{dP}\,B_{R^*}
\qquad \text{for all }(Z',\tau,h,g),
\]
so the tail estimate derived in Step 3 applies to this $f_\theta$.

Moreover, for every $(Z',\tau,h,g)\in\mathcal S_{R^*}$, each coordinate of $f(Z',\tau,h,g)$ lies in $[-B_{R^*},B_{R^*}]$. Since $\Pi_{B_{R^*}}$ is the metric projection onto this interval, for any $y\in[-B_{R^*},B_{R^*}]$ and any $x\in\mathbb R$,
\[
|\Pi_{B_{R^*}}(x)-y|\le |x-y|.
\]
Applying this coordinatewise with $y=[f(Z',\tau,h,g)]_{ij}$ and $x=[g_\theta(Z',\tau,h,g)]_{ij}$, we obtain
\[
\|f_\theta(Z',\tau,h,g)-f(Z',\tau,h,g)\|_F
\le
\|g_\theta(Z',\tau,h,g)-f(Z',\tau,h,g)\|_F
\]
for all $(Z',\tau,h,g)\in\mathcal S_{R^*}$. Since the output has $dP$ coordinates and the uniform coordinatewise error is at most $\varepsilon$, it follows that
\begin{align*}
\sup_{(Z',\tau,h,g)\in\mathcal S_{R^*}}
\|f_\theta(Z',\tau,h,g)-f(Z',\tau,h,g)\|_F^2
&\le dP\,\varepsilon^2.
\end{align*}

Define
\[
I_1(\tau,h,g):=\mathbb E\!\left[
\|f_\theta(Z_\tau',\tau,h,g)-f(Z_\tau',\tau,h,g)\|_F^2
\mathbf 1_{\{Z_\tau'\in\mathcal C_{R^*}\}}
\mid H_t=h,\mathcal G=g
\right].
\]
Since the integrand is bounded by $dP\,\varepsilon^2$ on $\mathcal C_{R^*}$, we have
\[
I_1(\tau,h,g)\le dP\,\varepsilon^2.
\]
Together with the tail estimate from Step 3,
\[
I_2(\tau,h,g)\le \varepsilon^2.
\]
Hence, by the reduction established in Step 2,
\begin{align*}
\|F_{A,\theta}(\cdot,\tau,h,g)-F^*(\cdot,\tau,h,g)\|_{L^2(P_{\tau\mid h,g})}^2
&=I_1(\tau,h,g)+I_2(\tau,h,g) \\
&\le dP\,\varepsilon^2+\varepsilon^2 \\
&=(dP+1)\varepsilon^2.
\end{align*}
Taking square roots yields
\[
\|F_{A,\theta}(\cdot,\tau,h,g)-F^*(\cdot,\tau,h,g)\|_{L^2(P_{\tau\mid h,g})}
\le \sqrt{dP+1}\,\varepsilon.
\]
Finally, taking the supremum over $\tau\in[\tau_0, 1-\tau_0]$, $h\in\mathcal H$, and $g\in\mathcal G$, we obtain
\[
\sup_{\tau\in[\tau_0, 1-\tau_0],\,h\in\mathcal H,\,g\in\mathcal G}
\|F_{A,\theta}(\cdot,\tau,h,g)-F^*(\cdot,\tau,h,g)\|_{L^2(P_{\tau\mid h,g})}
\le \sqrt{dP+1}\,\varepsilon.
\]
This completes the proof.
\end{proof}

\section{Qualitative FC Visualization}

\begin{figure}[t]
\centering
\includegraphics[width=\textwidth]{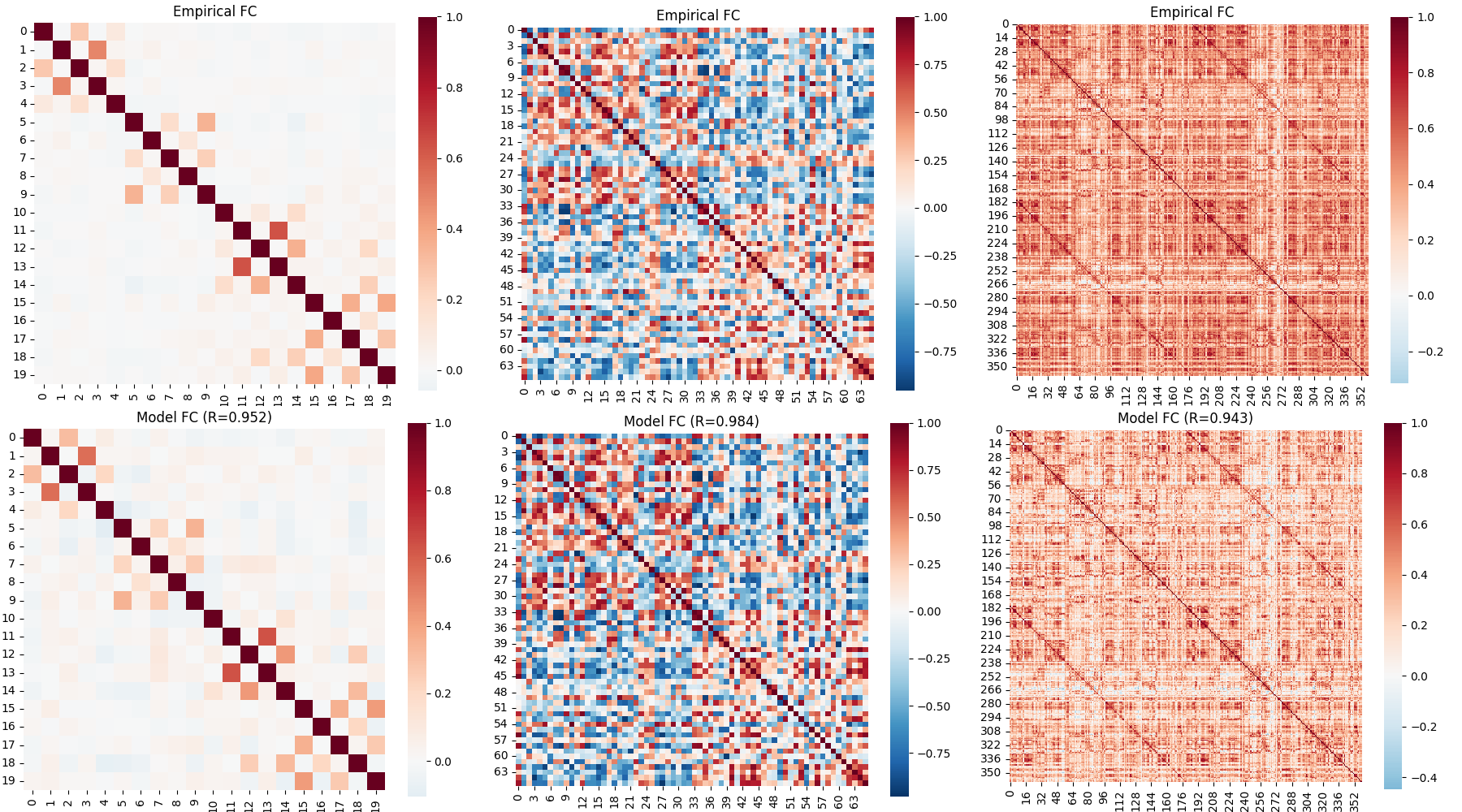}
\caption{Qualitative visualization of functional connectivity recovery on the RNN, SDDEs, and HCP datasets. From left to right, the columns correspond to RNN, SDDEs, and HCP. The top row shows empirical FC matrices, and the bottom row shows FC matrices generated by FAST-Brain.}
\label{fig:fc_visualization}
\end{figure}

Figure~\ref{fig:fc_visualization} provides a qualitative comparison between empirical FC matrices and FC matrices recovered from FAST-Brain generated sequences. From left to right, the three columns correspond to the RNN, SDDEs, and HCP datasets, with empirical FC matrices shown in the top row and generated FC matrices shown in the bottom row. FAST-Brain recovers the localized and network-structured connectivity patterns in the RNN dataset, suggesting that the model can learn the underlying coupling information induced by the synthetic dynamical system. For the more complex SDDEs and high-dimensional HCP datasets, FAST-Brain still preserves the major FC organization, including block-like structures, off-diagonal dependencies, and global correlation patterns. These visual results are consistent with the quantitative FC results in Table~\ref{tab:fc_main}, further supporting that FAST-Brain generates surrogate BOLD sequences with faithful functional connectivity structure even in high-dimensional settings.
\section{Dataset Generation}
\subsection{RNN Dataset}

We simulate a brain network consisting of $N=20$ nodes partitioned into $\mathcal{K}=4$ functional communities, where $c_i \in \{1,...,K\}$ denotes the network assignment of node $i$. The nodes are spatially embedded with coordinates $x_i \in \mathbb{R}^3$, and the Fiber Length matrix $L$ is derived from Euclidean distances adjusted by a cross-hemispheric penalty $\delta$, defined as $$L_{ij} = ||x_i - x_j||_2 + \delta \cdot \mathbb{I}(hemi_i \ne hemi_j),$$ Using this, the Structural Connectivity matrix $SC$ (fiber count) is generated via a distance-dependent stochastic block model. The expected connection intensity is determined by $$\lambda_{ij} = S \cdot B_{c_i c_j} \cdot \exp(-\alpha L_{ij}),$$ where $S$ is a scaling factor, $B$ is the network affinity matrix, and $\alpha$ is an exponential spatial decay, yielding $SC_{ij} \sim Poisson(\lambda_{ij})$. The brain activity is then modeled using an RNN with a state vector $x(t) \in \mathbb{R}^N$. The neural dynamics follow the stochastic differential equation $$dx(t) = [-x(t) + SC\, h(x(t))]dt + \sigma d\xi(t)$$, where $h(\cdot) = \tanh(\cdot)$ and $\xi(t)$ is an $N$-dimensional Gaussian noise process. To simulate these continuous dynamics, we employ Euler discretization to obtain the update rule $x(t+\Delta t) = x(t) + [-x(t) + SC\, h(x(t))]\Delta t + \sigma \sqrt{\Delta t} Z(t)$, with $Z(t) \sim \mathcal{N}(0, I_N)$. 

\subsection{SDDEs Dataset}
The signal transmission delay $\Gamma_{ij}$ and the effective coupling weight $W_{ij}$ (scaled by global coupling $G$) dictate the network input:
\begin{align*}
    \Gamma_{ij} &= \frac{K_{ij}}{c} \\
    W_{ij} &= G \cdot (A_{ij} \odot W_{SC,ij}) \\
    I_i(t) &= \sum_{j=1}^{N} W_{ij} V_j(t - \Gamma_{ij}) + \xi_i(t)
\end{align*}
where $c$ is the conduction velocity, $\odot$ is the Hadamard product, and $\xi_i(t) \sim \mathcal{N}(0, \sigma^2)$ is the baseline Gaussian white noise.

The local neural activity (latent state) for each node $i$ is driven by a Generic 2D Oscillator, yielding a system of Stochastic Delay Differential Equations (SDDEs):
\begin{align*}
    dV_i(t) &= \eta \cdot \tau \left[ -f V_i(t)^3 + e V_i(t)^2 + g V_i(t) + \alpha W_i(t) + \gamma I_i(t) \right] dt \\
    dW_i(t) &= \frac{\eta}{\tau_w} \left[ c V_i(t)^2 + b V_i(t) - \beta W_i(t) + \alpha_2 \right] dt
\end{align*}
where $V_i(t)$ represents the excitatory Local Field Potential (LFP), $W_i(t)$ is the slow inhibitory recovery variable, and $\{\eta, \tau, f, e, g, \alpha, \gamma, \tau_w, c, b, \beta, \alpha_2\}$ are fixed non-linear system parameters.

The continuous macroscopic observation $y_i(t)$ is obtained via convolution of the high-frequency latent state $V_i(t)$ with the first-order Volterra Hemodynamic Response Function (HRF) kernel $h(t)$:
\begin{align*}
    h(t) &= \frac{1}{3} \exp\left(-\frac{0.5t}{0.8}\right) \frac{\sin(\omega t)}{\omega}, \quad \text{with } \omega = \sqrt{\frac{1}{0.4} - \frac{1}{4 \times 0.8^2}} \\
    y_i(t) &= (V_i * h)(t) = \int_{0}^{t} V_i(s) h(t - s) ds
\end{align*}
The discrete fMRI BOLD measurement matrix $\mathbf{B} \in \mathbb{R}^{M \times N}$ is obtained by down-sampling $y_i(t)$ at a given Repetition Time ($TR$):
\begin{equation*}
    B_{m,i} = y_i(m \cdot TR), \quad m \in \{1, 2, \dots, M\}
\end{equation*}

\subsection{HCP dataset}
For the real rs-fMRI experiments, we use the same HCP data setting as NPI \citep{Luo2025}, following their HCP-MMP 360-ROI representation and preprocessing protocol.

\section{Implementation Details}

All FAST-Brain experiments and ablations use the same implementation, with only dataset-specific configurations changed. We split each time series chronologically: the first 80\% of the temporal samples are used for training, and the remaining temporal segment is reserved for rolling evaluation. Sliding windows are constructed within the corresponding split. The model conditions on a history window and predicts a future block of length $P=24$. The history length is set to $S=60$ for RNN and SDDEs, and to $S=10$ for HCP. For RNN and SDDEs, each node signal is independently min-max normalized to $[-1,1]$ before window construction. For HCP, the first 30 frames of each REST session are discarded, the first 360 Glasser ROIs are retained, and each session is standardized independently before window construction. In the structural branch, we use normalized graph powers up to order $K=2$. During sampling, the flow ODE is solved using explicit Euler discretization with 20 steps. 

All the experiments are completed on a single NVIDIA H100 GPU.

\section{Evaluation}

\subsection{Definitions of Evaluation Metrics}
\label{app:metric_definitions}

For completeness, we provide the definitions of the first eight evaluation metrics reported in our evaluation output. These metrics are grouped into two parts: unconditional generation metrics and conditional generation metrics. Specifically, the unconditional generation metrics are \emph{Discriminative Score}, \emph{Predictive Score}, \emph{Context-FID}, and \emph{Correlational Score}; the conditional generation metrics are \emph{CRPS}, \emph{QICE}, \emph{ProbCorr}, and \emph{Conditional FID}.

Let $\{X^{(i)}\}_{i=1}^{n}$ denote real samples and $\{\widetilde{X}^{(i)}\}_{i=1}^{n}$ denote generated samples. In conditional evaluation, let $H^{(b)}$ be the historical context for block $b$, let $Y^{(b)} \in \mathbb{R}^{N \times P}$ be the corresponding true future sequence, and let
\[
\widehat{Y}^{(b,1)},\ldots,\widehat{Y}^{(b,S)} \in \mathbb{R}^{N \times P}
\]
be $S$ stochastic predictions generated under the same condition $H^{(b)}$, where $P$ is the prediction horizon and $N$ is the number of ROIs.

\noindent\textbf{Conditional generation.}

\noindent\textbf{Continuous Ranked Probability Score (CRPS) \citep{matheson1976scoring}.} CRPS evaluates the overall probabilistic accuracy of conditional forecasts. For a scalar target $y$ and predictive cumulative distribution function $F$, CRPS is defined as
\[
\mathrm{CRPS}(F,y)
=
\int_{-\infty}^{\infty}
\bigl(F(z)-\mathbbm{1}\{z\ge y\}\bigr)^2\,dz.
\]
In our implementation, the predictive distribution is represented empirically by $S$ stochastic samples. For empirical samples $\widehat y_1,\ldots,\widehat y_S$, CRPS admits the equivalent form
\[
\mathrm{CRPS}
=
\frac{1}{S}\sum_{s=1}^{S}|\widehat y_s-y|
-
\frac{1}{2S^2}\sum_{s=1}^{S}\sum_{s'=1}^{S}|\widehat y_s-\widehat y_{s'}|.
\]
We average this quantity over all forecasted entries across blocks, horizons, and ROIs. Lower values indicate better calibrated and sharper probabilistic forecasts.

\noindent\textbf{Quantile Interval Coverage Error (QICE) \citep{han2022card}.} QICE measures the calibration of predictive intervals. For each target entry, we form $K$ quantile bins from the empirical predictive distribution and record into which bin the true observation falls. Let $r_k$ be the empirical proportion of true observations falling into bin $k$, and let the ideal proportion be $1/K$. Then
\[
\mathrm{QICE}
=
\frac{1}{K}\sum_{k=1}^{K}\left|r_k-\frac{1}{K}\right|.
\]
A smaller QICE means that the empirical coverage of predictive quantiles is closer to the nominal uniform target, hence the predictive distribution is better calibrated.

\noindent\textbf{Probabilistic Correlation Score (ProbCorr) \citep{ni2021sig}.} ProbCorr measures how well the model preserves correlation structure in conditional stochastic forecasts. For each block $b$, let $C^{(b)}_{\mathrm{true}}$ be the correlation matrix of the true future sequence $Y^{(b)}$, and let $C^{(b,s)}_{\mathrm{pred}}$ be the correlation matrix of the $s$-th generated future sample $\widehat{Y}^{(b,s)}$. We define
\[
\mathrm{ProbCorr}
=
\frac{1}{B}\sum_{b=1}^{B}
\frac{1}{S}\sum_{s=1}^{S}
\frac{1}{N^2}
\left\|
C^{(b,s)}_{\mathrm{pred}}-C^{(b)}_{\mathrm{true}}
\right\|_1.
\]
Thus, ProbCorr is the average entrywise absolute difference between predicted and true correlation matrices, further averaged over all stochastic samples and all conditional blocks. Lower values indicate better preservation of conditional correlation structure.

\noindent\textbf{Conditional FID \citep{yue2022ts2vec}.} Conditional FID evaluates alignment between real and generated conditional sequence distributions. For each context block $H^{(b)}$, we concatenate the context with the true future $Y^{(b)}$ to form a real conditional sequence, and concatenate the same context with each sampled prediction $\widehat{Y}^{(b,s)}$ to form generated conditional sequences. These sequences are embedded by a representation encoder $\phi(\cdot)$, producing real features with mean and covariance $(\mu_c^{r},\Sigma_c^{r})$ and generated features with mean and covariance $(\mu_c^{g},\Sigma_c^{g})$. The metric is
\[
\mathrm{Conditional\ FID}
=
\|\mu_c^{r}-\mu_c^{g}\|_2^2
+
\mathrm{Tr}\!\left(
\Sigma_c^{r}+\Sigma_c^{g}
-2(\Sigma_c^{r}\Sigma_c^{g})^{1/2}
\right).
\]
A smaller value indicates that the generated futures, when paired with the same history, better match the real conditional distribution in representation space.

\noindent\textbf{Unconditional generation.}

\noindent\textbf{Discriminative Score. \citep{yoon2019time}} This metric measures how easily generated samples can be distinguished from real samples. We train a binary classifier $D(\cdot)$ on real and generated sequences, with labels $1$ for real samples and $0$ for generated samples. Let $\mathrm{Acc}$ denote the classification accuracy on a held-out set. The discriminative score is defined as
\[
\mathrm{DS} = \left| \mathrm{Acc} - 0.5 \right|.
\]
If generated samples are indistinguishable from real samples, then the classifier cannot do better than random guessing and $\mathrm{DS}$ approaches $0$. A smaller value therefore indicates better generation quality.

\noindent\textbf{Predictive Score. \citep{yoon2019time}} This metric evaluates whether generated data preserve temporal predictive structure. A predictor $f_{\mathrm{pred}}$ is trained on generated sequences to forecast the next-step or target future values, and is then evaluated on real sequences. If $x^{(i)}_{T+1}$ denotes the true target and $\widehat{x}^{(i)}_{T+1}=f_{\mathrm{pred}}(x^{(i)}_{1:T})$ denotes the prediction, then the predictive score is computed by mean absolute error:
\[
\mathrm{PS} = \frac{1}{n}\sum_{i=1}^{n}\left|x^{(i)}_{T+1}-\widehat{x}^{(i)}_{T+1}\right|.
\]
Lower values indicate that the generated data better preserve the forecasting-relevant temporal structure.

\noindent\textbf{Context-FID. \citep{yuan2024diffusion}} This metric compares real and generated sequences in a learned contextual feature space. Let $\phi(\cdot)$ be the embedding extracted from a sequence encoder, and let
$
\mu_r,\Sigma_r \quad \text{and} \quad \mu_g,\Sigma_g
$
be the empirical means and covariance matrices of the real and generated embeddings, respectively. The Fr\'echet distance between the two Gaussian approximations is
\[
\mathrm{Context\mbox{-}FID}
=
\|\mu_r-\mu_g\|_2^2
+
\mathrm{Tr}\!\left(
\Sigma_r+\Sigma_g-2(\Sigma_r\Sigma_g)^{1/2}
\right).
\]
A smaller value means that the generated sequences are closer to the real ones in contextual representation space.

\noindent\textbf{Correlational Score. \citep{yuan2024diffusion}} This metric measures whether the pairwise dependency structure is preserved. For each sample, we compute its ROI-wise correlation matrix, and then compare the average correlation matrix of generated samples with that of real samples. Let $C_r$ and $C_g$ denote the corresponding average correlation matrices. The correlational score is defined as
\[
\mathrm{CS}
=
\frac{1}{N^2}\|C_r-C_g\|_1,
\]
that is, the mean absolute entrywise deviation between the two correlation matrices. Lower values indicate better preservation of dependence structure.

\subsection{Other Results}
\textbf{Conditional Generation.}
Tables~\ref{tab:cond_main2} and~\ref{tab:cond_main3} report additional conditional generation results on SDDEs and HCP. On HCP, FAST-Brain achieves the best performance across all four metrics, including CRPS, QICE, ProbCorr, and Conditional FID. This indicates that FAST-Brain improves probabilistic accuracy, calibration, correlation preservation, and feature-level distributional alignment in real high-dimensional rs-fMRI data. On SDDEs, FAST-Brain also achieves the best CRPS and QICE, showing strong marginal probabilistic accuracy and calibration.

For the remaining SDDEs metrics, FAST-Brain is competitive but not the best: TimeDiff obtains the lowest ProbCorr, while FlowTS achieves the lowest Conditional FID. This may reflect the stochastic delay effects, strong temporal coupling, and periodic structure in SDDEs. Some baselines may better fit local periodic patterns or feature-level statistics, leading to advantages on specific conditional metrics. In contrast, FAST-Brain prioritizes direct clean-signal recovery with spatial priors, which leads to stronger global FC recovery and latent-geometry preservation in the main experiments, even when individual conditional metrics are not uniformly optimal.
\begin{table}[H]
\centering
\renewcommand{\arraystretch}{1.2}
\footnotesize
\caption{Conditional generation metrics in SDDEs.}
\label{tab:cond_main2}
\ExpTableFont
\setlength{\tabcolsep}{2.5pt}

\begin{tabular}{lccccccc}
\toprule
Metric & NPI & TimeDiff & SSSD & DiffusionTS & FlowTS & DSFM & FAST-Brain \\
\midrule
CRPS$\downarrow$            & 0.213$_{\scriptstyle \pm 0.002}$ & 0.253$_{\scriptstyle \pm 0.000}$ & 0.340$_{\scriptstyle \pm 0.000}$ & 0.263$_{\scriptstyle \pm 0.001}$ & 0.411$_{\scriptstyle \pm 0.001}$ & 0.240$_{\scriptstyle \pm 0.000}$ & 0.198$_{\scriptstyle \pm 0.003}$ \\
QICE$\downarrow$            & 0.160$_{\scriptstyle \pm 0.000}$ & 0.125$_{\scriptstyle \pm 0.000}$ & 0.035$_{\scriptstyle \pm 0.000}$ & 0.023$_{\scriptstyle \pm 0.000}$ & 0.105$_{\scriptstyle \pm 0.000}$ & 0.152$_{\scriptstyle \pm 0.000}$ & 0.019$_{\scriptstyle \pm 0.002}$ \\
ProbCorr$\downarrow$        & 0.310$_{\scriptstyle \pm 0.006}$ & 0.215$_{\scriptstyle \pm 0.000}$ & 0.422$_{\scriptstyle \pm 0.000}$ & 0.238$_{\scriptstyle \pm 0.001}$ & 0.539$_{\scriptstyle \pm 0.001}$ & 0.247$_{\scriptstyle \pm 0.000}$ & 0.248$_{\scriptstyle \pm 0.003}$ \\
Conditional FID$\downarrow$ & 6.788$_{\scriptstyle \pm 0.348}$ & 4.302$_{\scriptstyle \pm 0.043}$ & 19.256$_{\scriptstyle \pm 0.066}$ & 3.741$_{\scriptstyle \pm 0.010}$ & 3.520$_{\scriptstyle \pm 0.005}$ & 9.174$_{\scriptstyle \pm 1.367}$ & 5.841$_{\scriptstyle \pm 0.049}$ \\
\bottomrule
\end{tabular}
\end{table}

\begin{table}[H]
\centering
\caption{Conditional generation metrics in HCP.}
\label{tab:cond_main3}
\ExpTableFont
\renewcommand{\arraystretch}{1.2}
\footnotesize
\setlength{\tabcolsep}{2.5pt}

\begin{tabular}{lccccccc}
\toprule
Metric & NPI & TimeDiff & SSSD & DiffusionTS & FlowTS & DSFM & FAST-Brain \\
\midrule
CRPS$\downarrow$            & 0.710$_{\scriptstyle \pm 0.017}$ & 0.829$_{\scriptstyle \pm 0.000}$ & 0.580$_{\scriptstyle \pm 0.000}$ & 0.640$_{\scriptstyle \pm 0.000}$ & 0.634$_{\scriptstyle \pm 0.002}$ & 0.566$_{\scriptstyle \pm 0.002}$ & 0.559$_{\scriptstyle \pm 0.002}$ \\
QICE$\downarrow$            & 0.160$_{\scriptstyle \pm 0.000}$ & 0.155$_{\scriptstyle \pm 0.000}$ & 0.008$_{\scriptstyle \pm 0.001}$ & 0.123$_{\scriptstyle \pm 0.000}$ & 0.095$_{\scriptstyle \pm 0.001}$ & 0.028$_{\scriptstyle \pm 0.000}$ & 0.027$_{\scriptstyle \pm 0.000}$ \\
ProbCorr$\downarrow$        & 0.464$_{\scriptstyle \pm 0.006}$ & 0.485$_{\scriptstyle \pm 0.000}$ & 0.566$_{\scriptstyle \pm 0.000}$ & 0.494$_{\scriptstyle \pm 0.000}$ & 0.551$_{\scriptstyle \pm 0.001}$ & 0.553$_{\scriptstyle \pm 0.001}$ & 0.450$_{\scriptstyle \pm 0.001}$ \\
Conditional FID$\downarrow$ & 81.185$_{\scriptstyle \pm 1.924}$ & 72.006$_{\scriptstyle \pm 1.618}$ & 21.527$_{\scriptstyle \pm 0.193}$ & 15.274$_{\scriptstyle \pm 0.013}$ & 15.909$_{\scriptstyle \pm 0.102}$ & 31.333$_{\scriptstyle \pm 1.290}$ & 13.882$_{\scriptstyle \pm 0.071}$ \\
\bottomrule
\end{tabular}
\end{table}
\textbf{Unconditional Generation.}
Although FAST-Brain is trained as a conditional future generator, unconditional generation metrics are computed by rolling the model forward autoregressively to produce full-length sequences without using future observations. Tables~\ref{tab:uncond_main1}--\ref{tab:uncond_main3} summarize unconditional generation results across RNN, SDDEs, and HCP. FAST-Brain shows strong overall performance, achieving the best Discriminative Score and Correlational Score on all three datasets. This indicates that the generated sequences are consistently more realistic and better preserve the temporal correlation structure of real brain signals. FAST-Brain also obtains the best Context-FID on RNN and SDDEs, suggesting improved alignment between generated and real feature distributions in both synthetic settings.

On HCP, FAST-Brain remains highly competitive on Context-FID and substantially outperforms most baselines in Discriminative Score and Correlational Score. Although FlowTS achieves slightly better Predictive Score and Context-FID on HCP, FAST-Brain provides stronger overall sample realism and correlation preservation. These results suggest that direct clean-signal prediction with spatial priors helps constrain generated trajectories toward structurally faithful brain dynamics, especially in terms of global temporal dependence and connectivity-related statistics.
\begin{table}[H]
\centering
\caption{Unconditional generation metrics in RNN.}
\label{tab:uncond_main1}
\ExpTableFont
\renewcommand{\arraystretch}{1.2}
\footnotesize
\setlength{\tabcolsep}{2.5pt}

\begin{tabular}{lccccccc}
\toprule
Metric & NPI & TimeDiff & SSSD & DiffusionTS & FlowTS & DSFM & FAST-Brain \\
\midrule
Discriminative Score$\downarrow$ & 0.330$_{\scriptstyle \pm 0.138}$ & 0.360$_{\scriptstyle \pm 0.145}$ & 0.165$_{\scriptstyle \pm 0.092}$ & 0.200$_{\scriptstyle \pm 0.097}$ & 0.115$_{\scriptstyle \pm 0.063}$ & 0.100$_{\scriptstyle \pm 0.081}$ & 0.085$_{\scriptstyle \pm 0.059}$ \\
Predictive Score$\downarrow$     & 0.230$_{\scriptstyle \pm 0.006}$ & 0.211$_{\scriptstyle \pm 0.002}$ & 0.211$_{\scriptstyle \pm 0.002}$ & 0.213$_{\scriptstyle \pm 0.002}$ & 0.208$_{\scriptstyle \pm 0.001}$ & 0.211$_{\scriptstyle \pm 0.002}$ & 0.209$_{\scriptstyle \pm 0.003}$ \\
Context-FID$\downarrow$          & 1.722$_{\scriptstyle \pm 0.666}$ & 1.508$_{\scriptstyle \pm 0.493}$ & 0.234$_{\scriptstyle \pm 0.052}$ & 0.253$_{\scriptstyle \pm 0.072}$ & 0.154$_{\scriptstyle \pm 0.021}$ & 0.619$_{\scriptstyle \pm 0.060}$ & 0.121$_{\scriptstyle \pm 0.029}$ \\
Correlational Score$\downarrow$  & 0.380$_{\scriptstyle \pm 0.102}$ & 0.217$_{\scriptstyle \pm 0.003}$ & 0.054$_{\scriptstyle \pm 0.001}$ & 0.058$_{\scriptstyle \pm 0.003}$ & 0.055$_{\scriptstyle \pm 0.001}$ & 0.067$_{\scriptstyle \pm 0.002}$ & 0.038$_{\scriptstyle \pm 0.001}$ \\
\bottomrule
\end{tabular}
\end{table}

\begin{table}[H]
\centering
\caption{Unconditional generation metrics in SDDEs.}
\label{tab:uncond_main2}
\ExpTableFont
\renewcommand{\arraystretch}{1.2}
\footnotesize
\setlength{\tabcolsep}{2.5pt}

\begin{tabular}{lccccccc}
\toprule
Metric & NPI & TimeDiff & SSSD & DiffusionTS & FlowTS & DSFM & FAST-Brain \\
\midrule
Discriminative Score$\downarrow$ & 0.270$_{\scriptstyle \pm 0.149}$ & 0.250$_{\scriptstyle \pm 0.175}$ & 0.350$_{\scriptstyle \pm 0.143}$ & 0.310$_{\scriptstyle \pm 0.158}$ & 0.340$_{\scriptstyle \pm 0.174}$ & 0.370$_{\scriptstyle \pm 0.078}$ & 0.120$_{\scriptstyle \pm 0.108}$ \\
Predictive Score$\downarrow$     & 0.248$_{\scriptstyle \pm 0.011}$ & 0.283$_{\scriptstyle \pm 0.006}$ & 0.510$_{\scriptstyle \pm 0.023}$ & 0.399$_{\scriptstyle \pm 0.030}$ & 0.550$_{\scriptstyle \pm 0.011}$ & 0.442$_{\scriptstyle \pm 0.020}$ & 0.334$_{\scriptstyle \pm 0.020}$ \\
Context-FID$\downarrow$          & 10.344$_{\scriptstyle \pm 0.303}$ & 13.530$_{\scriptstyle \pm 0.786}$ & 23.649$_{\scriptstyle \pm 3.418}$ & 10.998$_{\scriptstyle \pm 2.183}$ & 15.747$_{\scriptstyle \pm 0.696}$ & 42.045$_{\scriptstyle \pm 3.913}$ & 6.532$_{\scriptstyle \pm 0.806}$ \\
Correlational Score$\downarrow$  & 0.531$_{\scriptstyle \pm 0.006}$ & 0.418$_{\scriptstyle \pm 0.001}$ & 0.379$_{\scriptstyle \pm 0.002}$ & 0.360$_{\scriptstyle \pm 0.038}$ & 0.480$_{\scriptstyle \pm 0.002}$ & 0.398$_{\scriptstyle \pm 0.009}$ & 0.329$_{\scriptstyle \pm 0.015}$ \\
\bottomrule
\end{tabular}
\end{table}

\begin{table}[H]
\centering
\caption{Unconditional generation metrics in HCP.}
\label{tab:uncond_main3}
\ExpTableFont
\renewcommand{\arraystretch}{1.2}
\footnotesize
\setlength{\tabcolsep}{2.5pt}

\begin{tabular}{lccccccc}
\toprule
Metric & NPI & TimeDiff & SSSD & DiffusionTS & FlowTS & DSFM & FAST-Brain \\
\midrule
Discriminative Score$\downarrow$ & 0.287$_{\scriptstyle \pm 0.115}$ & 0.338$_{\scriptstyle \pm 0.113}$ & 0.262$_{\scriptstyle \pm 0.098}$ & 0.275$_{\scriptstyle \pm 0.109}$ & 0.300$_{\scriptstyle \pm 0.083}$ & 0.263$_{\scriptstyle \pm 0.118}$ & 0.200$_{\scriptstyle \pm 0.115}$ \\
Predictive Score$\downarrow$     & 0.746$_{\scriptstyle \pm 0.019}$ & 0.776$_{\scriptstyle \pm 0.008}$ & 0.795$_{\scriptstyle \pm 0.010}$ & 0.796$_{\scriptstyle \pm 0.010}$ & 0.692$_{\scriptstyle \pm 0.006}$ & 0.702$_{\scriptstyle \pm 0.012}$ & 0.740$_{\scriptstyle \pm 0.034}$ \\
Context-FID$\downarrow$          & 66.432$_{\scriptstyle \pm 9.314}$ & 93.019$_{\scriptstyle \pm 6.584}$ & 70.196$_{\scriptstyle \pm 2.164}$ & 85.045$_{\scriptstyle \pm 5.285}$ & 58.411$_{\scriptstyle \pm 4.880}$ & 82.183$_{\scriptstyle \pm 9.055}$ & 59.574$_{\scriptstyle \pm 3.787}$ \\
Correlational Score$\downarrow$  & 0.258$_{\scriptstyle \pm 0.053}$ & 0.361$_{\scriptstyle \pm 0.001}$ & 0.339$_{\scriptstyle \pm 0.001}$ & 0.242$_{\scriptstyle \pm 0.020}$ & 0.282$_{\scriptstyle \pm 0.002}$ & 0.210$_{\scriptstyle \pm 0.015}$ & 0.162$_{\scriptstyle \pm 0.004}$ \\
\bottomrule
\end{tabular}
\end{table}

\end{document}